\newenvironment{customthm}[1]
  {\innercustomthm}
  {\endinnercustomthm}
\newenvironment{customlem}[1]
  {\innercustomlem}
  {\endinnercustomlem}
\newtheorem{lemma_sec}{Lemma}[section]
\newtheorem{problem}{Problem}
\definecolor{bleudefrance}{rgb}{0.19, 0.55, 0.91}
\definecolor{ao(english)}{rgb}{0.0, 0.5, 0.0}
\newcommand{\addcite}[0]{\ifthenelse{\boolean{showcomments}}
{\textcolor{purple}{(add cite(s)) }}{}}%
\newcommand{\enrique}[1]{  \ifthenelse{\boolean{showcomments}}
{\todo[inline,color=bleudefrance]{Enrique: #1}}{}}
\newcommand{\rene}[1]{  \ifthenelse{\boolean{showcomments}}
{\todo[inline,color=cyan]{Ren\'e: #1}}{}}
\newcommand{\emmargin}[1]{\ifthenelse{\boolean{showcomments}}{\marginpar{\color{bleudefrance}\tiny EM: #1}}{}}
\newcommand{\hancheng}[1]{  \ifthenelse{\boolean{showcomments}}
{\todo[inline,color=orange]{Hancheng: #1}}{}}
\newcommand{\hl}[1]{\ifthenelse{\boolean{showcomments}}
{\textcolor{red}{#1}}{#1}}
\newcommand{\aem}[1]{
\ifthenelse{\boolean{showedits}}
{\added[id=EM]{#1}}
{\!#1\hspace{-4.75pt}}
}
\newcommand{\repem}[2]{
\ifthenelse{\boolean{showedits}}
{\replaced[id=EM]{#1}{#2}}
{\!#1\hspace{-4.75pt}}
}
\newcommand{\dem}[1]{
\ifthenelse{\boolean{showedits}}
{\deleted[id=EM]{#1}}
{}
}
\DeclareMathOperator{\tr}{tr}
\DeclareMathOperator{\rank}{rank}
\begin{document}

\title{Convergence and Implicit Bias of Gradient Flow on Overparametrized Linear Networks\thanks{Preprint}}

\author{\name Hancheng Min\footnotemark[2] \footnotemark[3] \email hanchmin@jhu.edu \\
\name Salma Tarmoun\footnotemark[2] \footnotemark[4] \email starmou1@jhu.edu \\
\name Ren\'e Vidal\footnotemark[2] \footnotemark[5] \email rvidal@jhu.edu\\
\name Enrique Mallada\footnotemark[2] \footnotemark[3]
\email mallada@jhu.edu \\
       \footnotemark[2] \addr Mathematical Institute for Data Science, Johns Hopkins University\\
       \footnotemark[3] \addr Department of Electrical and Computer Engineering, Johns Hopkins University\\
       \footnotemark[4] \addr Department of Applied Mathematics and Statistics, Johns Hopkins University\\
       \footnotemark[5] \addr Department of Biomedical Engineering, Johns Hopkins University
       }

\editor{}

\maketitle

\begin{abstract}
    Neural networks trained via gradient descent with random initialization and without any regularization enjoy good generalization performance in practice despite being highly overparametrized. 
    A promising direction to explain this phenomenon is to study how initialization and overparametrization affect convergence and implicit bias of training algorithms. In this paper, we present a novel analysis of single-hidden-layer linear networks trained under gradient flow, which connects initialization, optimization, and overparametrization. 
    Firstly, we show that the squared loss converges exponentially to its optimum at a rate that depends on the level of imbalance and the margin of the initialization.
    Secondly, we show that proper initialization constrains the dynamics of the network parameters to lie within an invariant set. In turn,  minimizing the loss over this set leads to the min-norm solution. Finally, we show that large hidden layer width, together with (properly scaled) random initialization, ensures proximity to such an invariant set during training, allowing us to derive a novel non-asymptotic upper-bound on the distance between the trained network and the min-norm solution.
\end{abstract}

\begin{keywords}
  Linear Networks, Overparametrized Models, Gradient Flow, Convergence, Implicit Bias
\end{keywords}

\enrique{The following comment does not need to be address before uploading to arxiv.
In general, when a journal paper is based on a conference one, it is important to highlight the differences/extensions. We shoudl check that this is indeed needed for JMLR (I'm assuming that you want to submit it there?)
}

\section{Introduction}
    \hancheng{In the introduction, changes are gray-colored}
    Neural networks have shown excellent empirical performance in many application domains such as vision~\citep{krizhevsky2012imagenet,rawat2017deep}, speech~\citep{hinton2012deep,graves2013speech} and video games~\citep{silver2016mastering,vinyals2017starcraft}. Among the many unexplained puzzles behind this success is the fact that gradient descent with random initialization, and without explicit regularization,  enjoys good generalization performance despite being highly overparametrized.
    
    One possible explanation of such phenomenon is the implicit bias or regularization that first order gradient algorithms induce under proper initialization assumptions. For example, in classification tasks, gradient descent on separable data can induce a bias towards the max-margin solution~\citep{soudry2018implicit,ji2019gradient,lyu2019gradient}. Similarly, in regression tasks, it has been shown that (deep) matrix factorization models trained by first order methods yield solutions with low nuclear norm~\citep{gunasekar2017implicit} and low rank~\citep{arora2019implicit}. Along the same vein, \citet{saxe2014exact,Gidel2019} have shown that 
    deep linear networks sequentially learn dominant singular values of the input-output correlation matrix.
    
    Another possible explanation is that, in the Neural Tangent Kernel (NTK) regime, the gradient flow of a randomly initialized infinitely wide neural network can be well approximated by the flow of its linearization at initialization~\cite{jacot2018neural,chizat2019lazy,arora2019exact,arora2019fine}.  In this regime, training infinitely wide neural networks mimics kernel methods. In particular, the NTK flow is constrained to lie on a manifold, which improves generalization performance as discussed in ~\citep{arora2019fine}.
    
    While the aforementioned analysis is quite insightful, it requires assumptions on the model and the initialization that are often disconnected. For example, the implicit bias characterized in~\citep{gunasekar2017implicit,arora2019implicit} requires vanishing initialization,
    while the analysis of convergence of gradient algorithms for linear networks requires balanced~\citep{arora2018optimization,arora2018convergence} or spectral~\citep{saxe2014exact,Gidel2019} initialization. Similarly, the NTK regime~\citep{jacot2018neural,arora2019exact}, requires random initialization and infinitely wide networks, making the non-asymptotic analysis challenging~\citep{arora2019exact}.

    This paper aims to bridge some of these gaps. We present a novel analysis of the gradient flow dynamics of overparametrized single-hidden-layer linear networks, which provides a common set of conditions on initialization that lead to convergence and implicit bias. {\color{black}Specifically, we reveal the explicit role of weights imbalance and weights product on the convergence of linear networks, suggesting a broad set of initial parameter values that lead to exponential convergence. }We further characterize a complementary  condition, based on orthogonality, that enforces the learning trajectory to be constrained within an invariant set whose unique global optimum is the min-norm solution. While our analysis does not require infinite width, vanishing, spectral, or random initialization, we show that our exponential convergence and orthogonality conditions are probably approximately satisfied for wide networks with properly scaled random initialization, jointly leading to a bound on the distance to the min-norm solution. Hence, this paper formally connects initialization, exponential convergence of the optimization task, overparametrization and implicit~bias.
    
    This paper makes the following contributions:
    \begin{enumerate}[leftmargin=4mm]
        \item  {\color{black}In Section \ref{sec:conv_grad_flow}, we show that the convergence of linear networks explicitly depends on: 1) a weight imbalance matrix; and 2) the weights product (end-to-end function). With such observation, we provide two conditions, \emph{sufficient imbalance} and \emph{sufficient margin}, on the intialization, with either of them being sufficient for guaranteeing exponential convergence. Our convergence analysis unifies prior work's assumptions and expands them to a broader set of initial conditions, as discussed in Section~\ref{sec:rel_work}.}
        \item In Section \ref{secc:decomp_net_gen}, we show the existence of a subset of the parameter space defined by an orthogonality condition, which is invariant under gradient flow. All trajectories within this invariant set lead to a unique minimizer (w.r.t. the end-to-end function), which corresponds to the min-norm solution. As a result, initializing the network within this invariant set always yields the min-norm solution upon convergence. 
     
        \item In Section \ref{secc:wide_linear_net}, we further show that by randomly initializing the network weights using $\mathcal N(0,1/h^{2\alpha})$ (where $h$ is the hidden layer width and $1/4<\alpha\leq 1/2$),
        one can approximately satisfy both our sufficient imbalance and orthogonality conditions with high probability. Notably, 
        initializations outside the invariant set require exponential convergence to control their deviation from the set. For linear networks our results also provide a novel non-asymptotic upper-bound on the operator norm distance between the trained network and the min-norm solution.
       \end{enumerate}
       
\subsection{Other Related Work}\label{sec:rel_work}

    \textbf{Convergence of Linear Networks}. Convergence in overparametrized linear networks has been studied for both gradient flow~\citep{saxe2014exact,pmlr-v139-tarmoun21a} and gradient descent~\citep{Gidel2019,arora2018convergence,arora2018optimization}. 
    ~\citet{saxe2014exact,Gidel2019,pmlr-v139-tarmoun21a} analyze the trajectory of network parameters under spectral initialization. For non-spectral initialization, although the fact that the imbalance is conserved under gradient flow has been exploited in~\citet{arora2018convergence,arora2018optimization}, the work studies balanced initialization and exploits the structure conveyed by it to study convergence of the learning dynamics. The analysis of convergence in the imbalanced case was recently studied in~\citet{pmlr-v139-tarmoun21a} for both spectral and non-spectral initializations. {\color{black}For non-spectral initialization, specifically, previous analyses largely rely on specific imbalance structure (For example, small imbalance~\citep{arora2018convergence}, and homogeneous imbalance~\citep{pmlr-v139-tarmoun21a}). Our analysis improves upon prior works by studying general imbalance structures. Particularly, our analysis identifies three key parameters, that quantify gaps and the spread of the spectrum of an imbalance matrix, that affect the rate of convergence of gradient flow.
    \enrique{I just added this sentence but wonder if we should put it in "contributions".}
    
    The summary of the convergence results for linear networks is shown in Table \ref{tb_lin_conv_init}, and we also illustrate all aforementioned non-spectral initialization in Figure \ref{fig_lin_conv_init}.}
    \begin{table}[!h]
    \centering
    \begin{tabular}{c|c|c}
    \hline
     & \emph{Spectral} & \emph{Non-spectral} \\ [0.2cm]  \hline
    \multirow{2}{*}{\emph{Balanced}} &\multirow{2}{*}{\begin{tabular}{c}
          \citep{saxe2014exact} \\
          \citep{Gidel2019}
     \end{tabular}} & \begin{tabular}{c}
          Exactly balanced\\
          \citep{arora2018optimization}
     \end{tabular} \\  [0.4cm]
    & & \begin{tabular}{c}
        Sufficient margin \\
        + Approximately balanced\\
     \citep{arora2018convergence} 
     \end{tabular} \\ [0.4cm] \hline
    \multirow{3}{*}{\emph{Imbalanced}} &\multirow{3}{*}{\begin{tabular}{c}
          \citep{pmlr-v139-tarmoun21a}
     \end{tabular}} &\begin{tabular}{c}
        Homogeneous imbalance \\
        \citep{pmlr-v139-tarmoun21a}
     \end{tabular} \\ [0.4cm]
    & &\begin{tabular}{c}
        \textbf{Sufficient level of imbalance} \\
        (Our work)
     \end{tabular} \\  [0.4cm]
    & &\begin{tabular}{c}
        \textbf{Sufficient margin} \\
        (Our work)
     \end{tabular}\\ \hline
    \end{tabular}
    \caption{List of initialization types that have been studied for the convergence of gradient flow on the single-hidden-layer linear networks. All non-spectral initialization types listed here are illustrated in Figure \ref{fig_lin_conv_init}}
    \end{table}\label{tb_lin_conv_init}
    
    \noindent
    \textbf{Wide Neural Networks}. 
    There has been a rich line of research that studies the 
    convergence~\citep{du2019a,du2019b,du2019width,allen2019convergence} and generalization~\citep{allen2019learning,arora2019fine,arora2019exact,li2018learning,cao2019generalization,buchanan2020deep} of wide neural networks with random initialization. The behavior of such networks in their infinite width limit can be characterized by the \emph{Neural Tangent Kernel} (NTK)~\citep{jacot2018neural}. Heuristically, training wide neural networks can be approximately viewed as kernel regression under gradient flow/descent~\citep{arora2019exact}. Hence, \hl{convergence and generalization can be understood by studying the non-asymptotic results regarding the equivalence of finite width networks to their infinite limit~\citep{du2019a,du2019b,allen2019convergence,arora2019fine,arora2019exact,buchanan2020deep}.}\emmargin{This comment is vague} More generally, such non-asymptotic results are related to the ``lazy training"~\citep{chizat2019lazy,du2019b,allen2019convergence}, where the network weights do not deviate too much from its initialization during training. Our results for wide linear networks presented in Section \ref{secc:wide_linear_net} do not follow the NTK analysis, but provide an alternative view on the effect of random initialization for linear networks when the hidden layer is sufficiently wide. 
    
    \subsection{Notation}
    For a matrix $A$, we let $A^T$ denote its transpose, $\mathrm{tr}(A)$ denote its trace, $\lambda_i(A)$ and $\sigma_i(A)$ denote its $i$-th eigenvalue and $i$-th singular value, respectively, in decreasing order (when adequate). {\color{black}For an $n\by m$ matrix $A$, we let $\sigma_{\min}(A)=\sigma_{\min\{n,m\}}(A)$, and we conventionally let $\lambda_i(A)=\sigma_i(A)=0, \forall i>\min\{m,n\}$.} We let $[A]_{ij}$, $[A]_{i,:}$, and $[A]_{:,j}$ denote the $(i,j)$-th element, the $i$-th row and the $j$-th column of $A$, respectively. We also let $\|A\|_2$ and $\|A\|_F$ denote the spectral norm and the Frobenius norm of $A$, respectively. {\color{black}For a symmetric matrix $A$, we write $A\succ 0$ ($A\succeq 0$, $A\prec 0$, or $A\preceq 0$) when $A$ is positive defnite (positive semi-definite, negative definite,  or negative semi-definite), and $A\succ (\succeq)B$, $A\prec (\preceq)B$ are equivalent to $A-B\succ (\succeq)0$, $A-B\prec (\preceq)0$, respectively.} For a scalar-valued or matrix-valued function of time, $F(t)$, we let $\dot{F}=\dot{F}(t)=\frac{d}{dt}F(t)$ denote its time derivative. Additionally, we let $I_n$ denote the identity matrix of order $n$ and $\mathcal{N}(\mu,\sigma^2)$ denote the normal distribution with mean $\mu$ and variance $\sigma^2$.
\section{Problem Setup}
    \hancheng{This section is mostly the same as icml}
We study the gradient flow on single-hidden-layer linear networks trained with squared $l_2$-loss. Given $N$ training samples $\{x^{(i)},y^{(i)}\}_{i=1}^{N}$, where $x^{(i)}\in\mathbb{R}^n$, $y^{(i)}\in\mathbb{R}^m$, we aim to solve the linear regression problem
    \be
        \min_{\Theta\in\mathbb{R}^{\hl{D}\times m}}\mathcal{L}=\frac{1}{2}\sum_{i=1}^N\|y^{(i)}-\Theta^Tx^{(i)}\|^2_2\,.\label{eq_lin_reg}
    \ee 
    We do so by training a single-hidden-layer linear network $y=f(x;V,U)=VU^Tx$, $V=\mathbb{R}^{m\times h}$, $U\in\mathbb{R}^{n\times h}$, where $h$ is the hidden layer width, with gradient flow, i.e., gradient descent with ``infinitesimal step size". In particular,
    \begin{itemize}
        \item we consider the under-determined case $n>\mathrm{rank}(X)$ for our regression problem, i.e., the input dimension is strictly larger than the rank of $X$. There are infinitely many solutions $\Theta^*$ that achieve optimal loss $\mathcal{L}^*$ of \eqref{eq_lin_reg};
        \item we consider an \emph{overparametrized} model such that $h\geq \min\{m,n\}$, i.e. there is no rank constraint on linear model $\Theta$ obtained from the linear network $UV^T$.
    \end{itemize}
    
    We rewrite the loss with respect to our parameters $V,U$ as
    \be
        \mathcal{L}(V,U)=\frac{1}{2}\sum_{i=1}^N\|y^{(i)}-VU^Tx^{(i)}\|^2_2=\frac{1}{2}\|Y-XUV^T\|^2_F\,,\label{eq_loss_u_v}
    \ee%
    where $Y=[y^{(1)},\cdots,y^{(N)}]^T$ and $X=[x^{(1)},\cdots,x^{(N)}]^T$. 
    The gradient flow dynamics are given by
    \begin{subequations}
    \begin{align}
        \dot{V}(t)=-\frac{\partial\mathcal{L}}{\partial V}(V(t),U(t))=(Y-XU(t)V^T(t))^TXU(t)\,,\label{eq_gf_1}\\ \dot{U}(t)=-\frac{\partial\mathcal{L}}{\partial U}(V(t),U(t))=X^T(Y-XU(t)V^T(t))V(t)\,.\label{eq_gf_2}
    \end{align}
    \end{subequations}
    Since we study the under-determined case, it is necessary to reparametrize the gradient flow dynamics, as shown in the next section. 
    
    (Note: For the rest of this paper, we drop the explicit dependence on time $t$ for scalar/matrix functions of time when such dependence is clear. For example, we will mostly write $U,\dot{U}$ instead of $U(t),\dot{U}(t)$.)

    \subsection{Reparametrization of Gradient Flow}
    Assuming that $n> r=\mathrm{rank}(X)$, the singular value decomposition (SVD) of $X$ can be written as
    \begin{align}
        &\;X=W\bmt \Sigma_x^{1/2} & 0\emt \bmt  \Phi_1^T\\ \Phi_2^T\emt\,,\label{eq_svd_x}
    \end{align}
    where $W\in\mathbb{R}^{N\times r}$, $\Phi_1\in\mathbb{R}^{D\times r}$, and $\Phi_2\in\mathbb{R}^{D\times (D-r)}$. Since $\Phi_1\Phi_1^T+\Phi_2\Phi_2^T=I_D$, we have
    \ben
        U=I_DU=(\Phi_1\Phi_1^T+\Phi_2\Phi_2^T)U=\Phi_1\Phi_1^TU+\Phi_2\Phi_2^TU\,,
    \een
    and hence we can reparametrize $U$ as $(U_1,U_2)$ using the bijection $U=\Phi_1U_1+\Phi_2U_2$, with  inverse $(U_1,U_2)=(\Phi_1^TU,\Phi_2^TU)$. 
    
    We write the gradient flow in \eqref{eq_gf_1}\eqref{eq_gf_2} explicitly as
    \begin{subequations}
    \begin{align}
        \dot{V}&=\;\lp Y-XUV^T\rp^TXU=\;E^T\Sigma_x^{1/2}\Phi_1^TU\,,\label{eq_gf_og_v}\\
        \dot{U}&=\;X^T\lp Y-XUV^T\rp V=\;\Phi_1\Sigma_x^{1/2}E V\,,\label{eq_gf_og_u}
    \end{align}
    \end{subequations}
    where 
    \be E=E(V,U_1):=W^TY-\Sigma_x^{1/2}U_1V^T\,,\label{eq_def_err}
    \ee is defined to be the \emph{error}. Then from \eqref{eq_gf_og_v}\eqref{eq_gf_og_u} we obtain the dynamics in the parameter space $(V,U_1,U_2)$ as
    \be
        \dot{V}=E^T\Sigma_x^{1/2}U_1\,,\ \dot{U}_1=\Sigma_x^{1/2} EV\,,\  \dot{U}_2=0\,.\label{eq_gf_rp}
    \ee
    Notice that
    \begin{align}
        \mathcal{L}(V,U)=\frac{1}{2}\|Y-XUV^T\|^2_F&=\;\frac{1}{2}\|(I-WW^T)Y+WE\|^2_F\nonumber\\
        &=\;\frac{1}{2}\|WE\|^2_F+\frac{1}{2}\|(I-WW^T)Y\|_F^2\nonumber\\
        &=\;\frac{1}{2}\|E\|^2_F+\frac{1}{2}\|(I-WW^T)Y\|_F^2\,,\label{eq_loss_decomp}
    \end{align}
    where the last equality is because $W$ has orthonormal columns. Here the last term in \eqref{eq_loss_decomp} does not dependson $V,U$, and it is the residual
    $$
        \mathcal{L}^*=\frac{1}{2}\|(I-WW^T)Y\|_F^2\,,
    $$
    which is also the optimal value of \eqref{eq_lin_reg}.
    Therefore, for convergence, it suffices to analyze the convergence of the error $E$ under the dynamics of $V,U_1$ in \eqref{eq_gf_rp}. The role of $U_2$ is discussed when we study the implicit bias in Section \ref{sec:gen_lin_net}.
    
    
    \section{Convergence Analysis for Gradient Flow on Single-Hidden-Layer Linear Networks}\label{sec:conv_grad_flow}
    \hancheng{This section is brand-new}
    With the reparametrization of the gradient flow, we study, for convergence, the dynamics of $V,U_1$,
    \ben
        \dot{V}=E^T\Sigma_x^{1/2}U_1\,,\ \dot{U}_1=\Sigma_x^{1/2} EV\,,
    \een
    this is exactly the gradient flow dynamics on\be
        \frac{1}{2}\|E\|_F^2=\frac{1}{2}\|W^TY-\Sigma_x^{1/2}U_1V^T\|_F^2\,.\label{eq_err_frob}
    \ee
    In particular, when $\Sigma_x^{1/2}=I_r$, \eqref{eq_err_frob} reduces to $\frac{1}{2}\|W^TY-U_1V^T\|_F^2$, the loss function for a matrix factorization problem. To motivate our main result, we start with the simplest scalar version of this factorization problem.
    \subsection{Warm-up: Scalar Dynamics}\label{ssec:scalar}
        Consider the gradient flow dynamics on the loss function
        $\mathcal{L}_s(u,v)=\frac{1}{2}|y-uv|^2$, we have
        \be
            \dot{u}=(y-uv)v,\ \dot{v}=(y-uv)u\,.\label{eq_gf_scalar_uv}
        \ee
        This dynamics appear when one studies the gradient flow on \eqref{eq_err_frob} under the spectral initialization~\citep{saxe2014exact,Gidel2019,pmlr-v139-tarmoun21a}. One important feature of \eqref{eq_gf_scalar_uv}, is that the \emph{imbalance} $d:=u^2-v^2$ is invariant under the gradient flow, namely
        \ben
            \dot{d}=2u\dot{u}-2v\dot{v}\equiv 0\,.
        \een
        For the scalar dynamics, such invariance admits explicit solution $u(t),v(t)$ given a fixed imbalance at initialization~\citep{saxe2014exact,pmlr-v139-tarmoun21a}, and the asymptotic convergence rate of $\mathcal{L}_s$ around the equilibrium explicitly depends on the imbalance~\citep{pmlr-v139-tarmoun21a}. In our analysis, the imbalance plays an critical role as well, though in a more global sense. We show two types of initialization that guarantees exponential convergence of $\mathcal{L}_s$: 1) sufficient imbalance; 2) sufficient margin.
        
        One sufficient condition for exponential convergence is a lower bound on the \emph{instantaneous rate} $-\frac{\dot{\mathcal{L}}_s}{\mathcal{L}_s}$, to see this, notice that $\forall t\geq 0$
        \begin{align*}
            -\frac{\dot{\mathcal{L}}_s}{\mathcal{L}_s}\geq c>0 \Ra \int_0^t\frac{\dot{\mathcal{L}}_s(\tau)}{\mathcal{L}_s(\tau)}d\tau \leq \int_0^t-cd\tau \Ra  \log\mathcal{L}_s\biggr\vert^t_0\leq -ct &\Ra\; \log\frac{\mathcal{L}_s(t)}{\mathcal{L}_s(0)}\leq -ct\\
            &\Ra\; \mathcal{L}_s(t)\leq \exp(-ct)\mathcal{L}_s(0)\,,
        \end{align*}
        i.e., a lower bound $c>0$ on the instantaneous rate implies the loss converges to 0 exponentially at a rate at least $c$. Now under the scalar dynamics \eqref{eq_gf_scalar_uv}, one can easily verify that
        \be
            -\frac{\dot{\mathcal{L}}_s}{\mathcal{L}_s}=-\frac{-(y-uv)^2v^2-(y-uv)^2u^2}{(y-uv)^2/2}=2(u^2+v^2)\label{eq_scalar_insta_r_uv}\,.
        \ee
        From the definition of imbalance $d=u^2-v^2$, we have
        \begin{subequations}
        \begin{align}
            u^4&=\;u^2(u^2)=u^2(d+v^2)=du^2+(uv)^2\,,\label{eq_quad_scalar_u}\\
            v^4&=\;v^2(v^2)=v^2(-d+u^2)=-dv^2+(uv)^2\,.\label{eq_quad_scalar_v}
        \end{align}
        \end{subequations}
        Now if we regard the product $uv$ as a known value, then \eqref{eq_quad_scalar_u} and \eqref{eq_quad_scalar_v} are quadratic equations with respect to $u^2$ and $v^2$, whose solutions are
        \be
            u^2=\frac{d+\sqrt{d^2+4(uv)^2}}{2}\,,\ v^2=\frac{-d+\sqrt{d^2+4(uv)^2}}{2}\,.\label{eq_quad_scalar_sol}
        \ee
        Replacing $u^2,v^2$ in \eqref{eq_scalar_insta_r_uv} with the  solutions in \eqref{eq_quad_scalar_sol}, we have
        \be
            -\frac{\dot{\mathcal{L}}_s}{\mathcal{L}_s}=2(u^2+v^2)=2\sqrt{d^2+4(uv)^2}\,,\label{eq_scalar_insta_r_dp}
        \ee
        i.e. the instantaneous rate can be explicitly written as the function of the imbalance $d$ and the product $uv$. More importantly, with proper initialization, we can control the value of $d$ and $uv$ throughout the entire trajectory. Specifically,
        \begin{itemize}
            \item Since the imbalance $d$ is time-invariant, we have $d(t)=d(0)$. When $|d(0)|>0$, there is \emph{sufficient imbalance} at initialization, and
            \ben
                -\frac{\dot{\mathcal{L}}_s(t)}{\mathcal{L}_s(t)}=2\sqrt{d^2(t)+4(u(t)v(t))^2}\geq 2|d(t)|=2|d(0)|\,.
            \een
            \item The product is tied to the loss function $\mathcal{L}_s=|y-uv|^2/2$, and the loss is non-decreasing. When $|y|-|y-u(0)v(0)|>0$, there is \emph{sufficient margin} at initialization, and from 
            \ben
                |u(t)v(t)|\geq |y|-|y-u(t)v(t)|\geq |y|-|y-u(0)v(0)|\,,
            \een
            we have 
            \ben
                -\frac{\dot{\mathcal{L}}_s(t)}{\mathcal{L}_s(t)}=2\sqrt{d^2(t)+4(u(t)v(t))^2}\geq 4|u(t)v(t)|=4(|y|-|y-u(0)v(0)|)\,.
            \een
        \end{itemize}
        Combining the two observations above, we have
        \be
             -\frac{\dot{\mathcal{L}}_s}{\mathcal{L}_s}=2\sqrt{d^2+4(uv)^2}\geq 2\sqrt{d^2(0)+4(\max\{|y|-|y-u(0)v(0)|,0\})^2}\,.\label{eq_scalar_insta_r_lb}
        \ee
        That is, $\mathcal{L}_s$ converges to zero exponentially when either $|d(0)|>0$ (sufficient imbalance) or $|y|-|y-u(0)v(0)|>0$ (sufficient margin). Our main results in the next section show that such observation can be completely generalized to the matrix factorization problem, allowing us to derive exponential convergence guarantees for gradient flow on single-hidden-layer linear networks.
        
    \subsection{Main results}
    
    Now we turn to study the gradient dynamics in \eqref{eq_gf_rp}. Similar to the scalar dynamics, we define the \emph{imbalance} of the single-hidden-layer linear network under input data $X$ as
    \be
        \textit{Imbalance}:\ D=U_1^TU_1-V^TV \in \mathbb{R}^{h\times h}\,.
    \ee
    This imbalance matrix, as expected, is time-invariant under gradient flow dynamics \eqref{eq_gf_rp}. To see this, we compute the time derivative of $U_1^TU_1$ and $V^TV$ as
    \begin{align*}
        \frac{d}{dt}U_1^T  U_1&=\;\dot{U}_1^TU_1+U_1^T\dot{U}_1=V^TE^T\Sigma_x^{1/2}U_1+U_1^T\Sigma_x^{1/2}EV,\\
        \frac{d}{dt}V^T V &=\;  V^T\dot{V}+\dot{V}^TV=V^TE^T\Sigma_x^{1/2}U_1+U_1^T\Sigma_x^{1/2}EV\,.
    \end{align*}
    Because
    $\frac{d}{dt}U_1^TU_1$ and $\frac{d}{dt}V^TV$ are identical, one have $\dot{D}=\frac{d}{dt}[U_1^TU_1-V^TV]\equiv 0$.
    
    Our first result is the lower bound on the instantaneous rate (Proof left to Section \ref{ssec:conv_pf}):
    
    \begin{proposition}[Bound on the instantaneous rate]\label{prop_lb_insta_rate}
    Consider the continuous time dynamics in \eqref{eq_gf_rp}. Let $\tilde{\mathcal{L}}:=\mathcal{L}-\mathcal{L}^*$ and $D=U_1^TU_1-V^TV$, then we have
    \begin{align}
        -\frac{\dot{\tilde{\mathcal{L}}}}{\tilde{\mathcal{L}}}&\geq\; \lambda_r(\Sigma_x)\lp -\Delta_++\sqrt{(\Delta_++\underline{\Delta})^2+4\sigma^2_{m}(U_1V^T)}\right.\nonumber\\
        &\;\quad\quad\quad\quad\quad\quad\quad \left.-\Delta_-+\sqrt{(\Delta_-+\underline{\Delta})^2+4\sigma^2_{r}(U_1V^T)}\rp\,,\label{eq_prop_lb_insta_rate}
    \end{align}
    where we define
    \begin{align}
        \text{(Positive imbalance spectrum spread)}\ \Delta_+&\;=\max\{\lambda_1(D),0\}-\max\{\lambda_r(D),0\}\,,\label{eq_def_eigs1}\\
        \text{(Negative imbalance spectrum spread)}\ \Delta_-&\;=\max\{\lambda_1(-D),0\}-\max\{\lambda_m(-D),0\}\,, \label{eq_def_eigs2}\\
        \text{(Effective level of imbalance)}\ \underline{\Delta}&\;=\max\{\lambda_r(D),0\}+\max\{\lambda_m(-D),0\}\,.\label{eq_def_eigs3}
    \end{align}
    \end{proposition}
\enrique{I would call the third quantity, imbalance spectral gap.}    
    If we think the imbalance $D$ and the product $U_1V^T$ as two factors that contribute to the instantaneous rate, their ``individual" contributions are (assuming $\lambda_r(\Sigma_x)=1$): 1) When  $D=0$, the lower bound reduces to $2\sigma_{\min}(U_1V^T)$ (when $r\neq m$) or $4\sigma_{\min}(U_1V^T)$ (when $r= m$);  2) When $U_1V^T=0$, the lower bound reduces to $2\underline{\Delta}$. That is, the product contributes to the rate through $\sigma_{\min}(U_1V^T)$, while the imbalance does so through level of imbalance $\underline{\Delta}$. As we see in \eqref{eq_prop_lb_insta_rate}, it is not as straightforward as in \eqref{eq_scalar_insta_r_dp} to combine these two factors since extra terms  $\Delta_+, \Delta_-$ enter the lower bound. 
    
    \begin{remark}
        Although in the scalar case the instantaneous rate can be exactly expressed by imbalance and product, the rate also depends on the target $\tilde{Y}=W^TY$ for the general matrix dynamics. Our lower bound in \eqref{eq_prop_lb_insta_rate} is considered optimal when regarding the target as being adversely chosen to minimize the rate. We refer the reader to Appendix \ref{app_tightness} for detailed discussion.
    \end{remark}
    
    
    For the lower bound in \eqref{eq_prop_lb_insta_rate}, one can verify that
    \begin{align*}
        \sqrt{\underline{\Delta}^2+4\sigma^2_{m}(U_1V^T)}&\geq\; -\Delta_++\sqrt{(\Delta_++\underline{\Delta})^2+4\sigma^2_{m}(U_1V^T)}\geq \underline{\Delta}\,,\\
        \sqrt{\underline{\Delta}^2+4\sigma^2_{r}(U_1V^T)}&\geq\; -\Delta_-+\sqrt{(\Delta_-+\underline{\Delta})^2+4\sigma^2_{r}(U_1V^T)}\geq \underline{\Delta}\,.
    \end{align*}
    The right extreme is obtained when $\Delta_+,\Delta_-\ra \infty$ or the singular values are zero, and the left extreme is obtained when $\Delta_+=\Delta_-=0$. 
    
    Therefore, when $\Delta_-,\Delta_+$ are much larger than $\underline{\Delta}$, the lower bound is approximately $$2\lambda_r(\Sigma_x)\underline{\Delta}.$$ When $\Delta_-,\Delta_+$ are much smaller, the bound is approximately $$\lambda_r(\Sigma_x)\lp\sqrt{\underline{\Delta}^2+4\sigma^2_{m}(U_1V^T)}+\sqrt{\underline{\Delta}^2+4\sigma^2_{r}(U_1V^T)}\rp.$$ The latter becomes $2\lambda_r(\Sigma_x)\sqrt{\underline{\Delta}^2+4\sigma^2_{\min}(U_1V^T)}$ when $r=m$, which takes the similar form as in the scalar case. From the experiments in Section \ref{sec:num_expmt}, we see that under random initialization, networks with small width falls into the first regime and ones with large width falls into the latter, and the loss curves behaves differently in these two regimes.
    
    As we illustrated with the scalar dynamics, the lower bound in Proposition \ref{prop_lb_insta_rate}, which depends explicitly on imbalance and product, is useful because one can control the two factors for the entire trajectory with proper initialization. This allows us to derive exponential convergence guarantees for the gradient flow, as stated in our main theorem next (Proof left to Section \ref{ssec:conv_pf}).
    
    \begin{theorem}[Exponential Convergence Guarantee]\label{thm_conv_lin_net}
    Consider the continuous dynamics in \eqref{eq_gf_rp}. Let $\tilde{Y}:=W^TY$ and  define
    \begin{align}
        c(t)&=\;-\Delta_++\sqrt{(\Delta_++\underline{\Delta})^2+4(\max \{\sigma_{m}(\tilde{Y})-\|\tilde{Y}-\Sigma_x^{1/2}U_1(t)V(t)^T\|_F,0\})^2/\lambda_1(\Sigma_x)}\nonumber\\
        &\;\quad -\Delta_-+\sqrt{(\Delta_-+\underline{\Delta})^2+4(\max \{\sigma_{r}(\tilde{Y})-\|\tilde{Y}-\Sigma_x^{1/2}U_1(t)V(t)^T\|_F,0\})^2/\lambda_1(\Sigma_x)}\,,\label{eq_thm_conv_lin_net}
    \end{align}
    where $\Delta_+, \Delta_-$, and $\underline{\Delta}$ are define as in \eqref{eq_def_eigs1},\eqref{eq_def_eigs2}, and \eqref{eq_def_eigs3}. Then we have
    $$
        (\mathcal{L}(t)-\mathcal{L}^*)\leq \exp\lp -\lambda_r(\Sigma_x)c(0)t\rp (\mathcal{L}(0)-\mathcal{L}^*),\forall t\geq 0\,.
    $$
    That is, if $c(0)>0$, then the loss converges to its global minimum exponentially with a rate at least $\lambda_r(\Sigma_x)c(0)$.
    \end{theorem}
    Theorem \ref{thm_conv_lin_net} unifies several previously discovered sufficient conditions for  exponential convergence of the gradient flow on two-layer linear networks:
    \begin{corollary}[Sufficient level of imbalance \citep{mtvm21icml}]\label{col_conv_imb}
        If $\underline{\Delta}>0$ at initialization, then the loss converges to zero exponentially with a rate at least $2\lambda_r(\Sigma_x)\underline{\Delta}(0)$.
    \end{corollary}
    \begin{proof}
        In \eqref{eq_thm_conv_lin_net}, if we lower bound the margin term ($\max \{\sigma_{m}(\tilde{Y})-\|\tilde{Y}-\Sigma_x^{1/2}U_1V^T\|_F,0\}$) by $0$, we have $c\geq 2\underline{\Delta}$.
    \end{proof}
    Previous work~\citep{mtvm21icml} identifies the role of \emph{effective level of imbalance} $\underline{\Delta}$ and proves the convergence result in Corollary \ref{col_conv_imb}. Our result generalizes it by showing the combined contribution of level of imbalance and the margin to the convergence. 
    
    \begin{corollary}[Sufficient margin]\label{col_exp_conv_margin}
        If at initialization, $\sigma_{\min}(\tilde{Y})-\|\tilde{Y}-\Sigma_x^{1/2}U_1V^T\|_F>0$, then $c(0)>0$ and the loss converges to zero exponentially with a rate at least $\lambda_r(\Sigma_x)c(0)$.
    \end{corollary}
    Previous work~\citep{arora2018convergence} has shown that when the initialization has a positive margin, i.e., $\sigma_{\min}(\tilde{Y})-\|\tilde{Y}-\Sigma_x^{1/2}U_1V^T\|_F>0$ and the imbalance has sufficiently small Frobenius norm (approximately balanced), then the gradient flow converges exponentially. Corollary \ref{col_exp_conv_margin} improves upon it by showing that a positive margin is sufficient, regardless of the imbalance.
    
    \begin{corollary}[Characterizing local convergence rate]
        If at some $t_0>0$, we have $c(t_0)>0$, then 
        $$
            (\mathcal{L}(t)-\mathcal{L}^*)\leq \exp\lp -\lambda_r(\Sigma_x)c(t_0)t\rp (\mathcal{L}(t_0)-\mathcal{L}^*),\forall t\geq t_0\,.
        $$
        That is, after $t_0$, the loss converges to zero exponentially with a rate of at least $\lambda_r(\Sigma_x)c(t_0)$. Notably, given any trajectory that eventually converges to a global minimum for $\mathcal{L}$, for sufficiently large $t_0$, we have
        \begin{align}
            c(t_0)&\;\simeq -\Delta_++\sqrt{(\Delta_++\underline{\Delta})^2+4\sigma^2_{m}(\tilde{Y})/\lambda_1(\Sigma_x)} -\Delta_-+\sqrt{(\Delta_-+\underline{\Delta})^2+4\sigma^2_{r}(\tilde{Y})/\lambda_1(\Sigma_x)}\,.\label{eq_col_local_conv}
        \end{align}
    \end{corollary}
    For any trajectory that eventually converges, \eqref{eq_col_local_conv} is due to the fact that $$\|W^TY-\Sigma_x^{1/2}U_1(t_0)V^T(t_0)\|_F\simeq 0\,,$$
    at sufficiently large $t_0$. This corollary suggests that the asymptotic convergence rate around the equilibrium depends on the imbalance $D$ and the target $Y$. Previous work~\citep{pmlr-v139-tarmoun21a} has shown that when $\Sigma_x=I_r$, $h=r=m$ and $D=\lambda I_h$ for some $\lambda\neq 0$, the asymptotic convergence rate of the gradient flow is lower bounded by $2\sqrt{\lambda^2+4\sigma^2_{\min} (\tilde{Y})}$, and this can be exactly recovered from \eqref{eq_col_local_conv} with $\Delta_+=\Delta_-=0,\ \underline{\Delta}=\lambda$. Our result has no additional assumption on the dimension nor on the imbalance structure.
    
    The major limitation of previous works on convergence is the requirement on the imbalance structure: exactly balanced~\citep{arora2018optimization}, or homogeneously imbalanced~\citep{pmlr-v139-tarmoun21a} initialization admits explicit dynamics of the product $U_1V^T$ (the end-to-end function), from which the convergence results are derived. Such analyses considers, as illustrated in Figure \ref{fig_lin_conv_init}, specific configurations in the parameter space, and only allow small variations~\citep{arora2018convergence}. Our analysis breaks such limitation by revealing fundamental relations between the convergence and the weight configuration (imbalance and product), as explicitly seen in the scalar dynamics, which provides convergence guarantees for a wide range of initialization.
    
    \begin{figure}[!h]
        \centering
        \includegraphics[height=9cm]{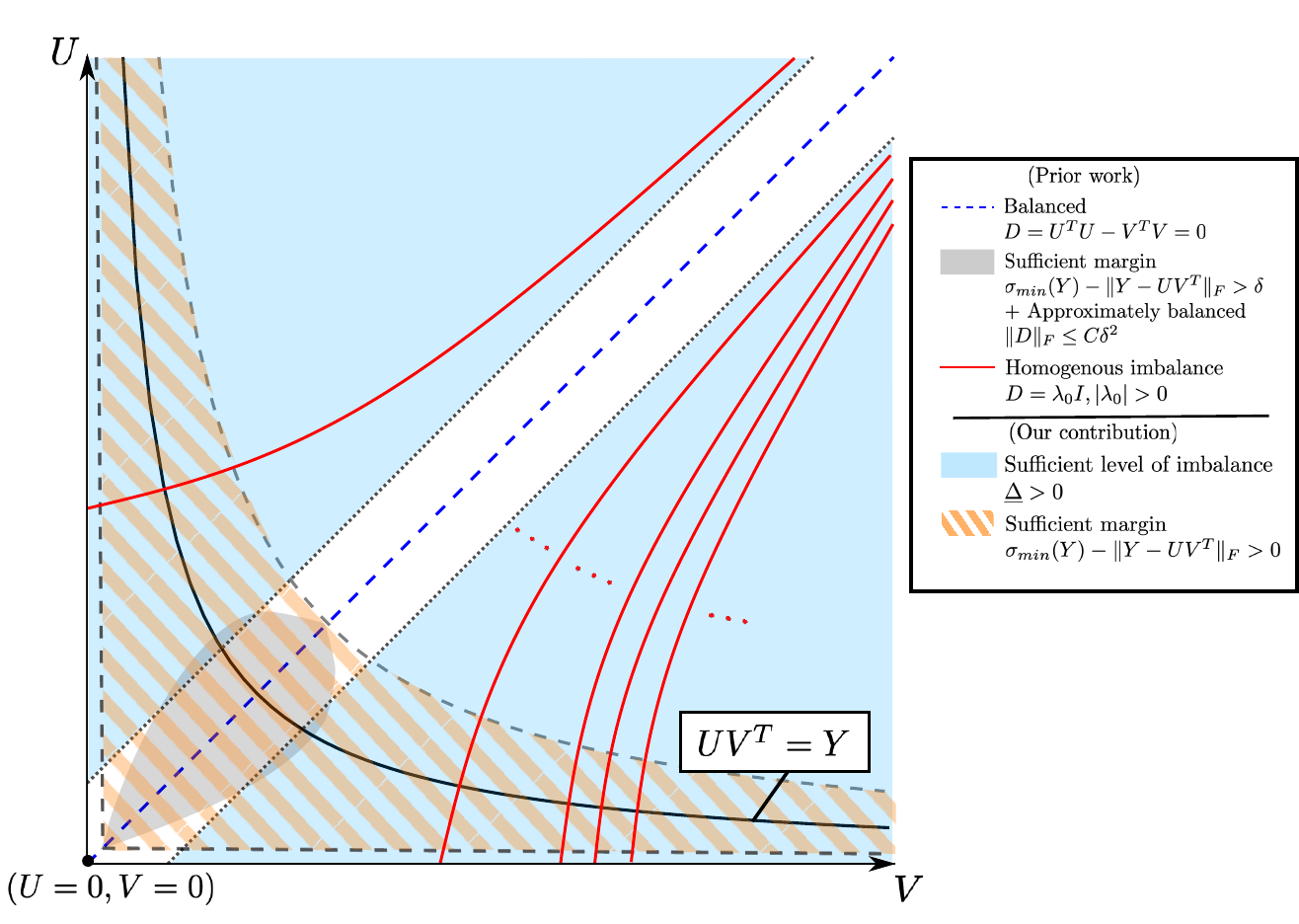}
        \caption{Illustration of non-spectral initialization studied for convergence of linear networks. Note: the conditions are presented for the gradient flow on $\frac{1}{2}\|Y-UV^T\|$, which is the special case of ours when $X=I_n$.}
        \label{fig_lin_conv_init}
    \end{figure}
    
    \subsection{Proof Sketch for the Main Results}\label{ssec:conv_pf}
    
    The proof of our main results Proposition \ref{prop_lb_insta_rate} and Theorem \ref{thm_conv_lin_net} follows exactly the same procedure for the scalar dynamics in Section \ref{ssec:scalar}. We sketch the proof in this section and leave the proofs for all the stated Lemmas to Appendix \ref{app_pf_conv_lem}.
    
    First of all, we lower bound the instantaneous rate with singular values of $U_1,V$, similar to \eqref{eq_scalar_insta_r_uv}.     
    \begin{lemma}\label{lem_lb_u_v}
        Consider the continuous dynamics in \eqref{eq_gf_rp}. Let $\tilde{\mathcal{L}}:=\mathcal{L}-\mathcal{L}^*$, then we have
        $$
        -\frac{\dot{\tilde{\mathcal{L}}}}{\tilde{\mathcal{L}}}\geq 2\lambda_r(\Sigma_x)(\lambda_r(U_1U_1^T)+\lambda_m(VV^T))\,.
        $$
    \end{lemma}
    Recall that for the scalar case, \eqref{eq_quad_scalar_u}\eqref{eq_quad_scalar_v} can be viewed as quadratic equations of $u^2$ and $v^2$ respectively. For the matrix case, one can derive quadratic inequalities of $\lambda_r(U_1U_1^T)$ and of $\lambda_m(VV^T)$, whose solutions give us lower bounds on $\lambda_r(U_1U_1^T)$ and $\lambda_m(VV^T))$, respectively. More generally, we have
    \begin{lemma}\label{lem_lb_a_b}
        Suppose $h\geq \min\{r,m\}$. Given any $A\in\mathbb{R}^{r\times h},B\in\mathbb{R}^{h\times m}$ that satisfy $A^TA-BB^T=D$ for some $D\in\mathbb{R}^{h\times h}$, to get 
        \be
            \lambda_{m}(B^TB)\geq \frac{-\bar{\lambda}+\underline{\lambda}+\sqrt{(\bar{\lambda}+\underline{\lambda})^2+4\sigma_m^2(AB)}}{2}\,, \label{eq_lem_lb_w2}
        \ee
        where $\bar{\lambda}=\max\{\lambda_1(D),0\}$ and $\underline{\lambda}=\max\{\lambda_m(-D),0\}$.
    \end{lemma}
    
    Combining Lemma \ref{lem_lb_u_v} and Lemma \ref{lem_lb_a_b}, we have the desired bound on the instantaneous rate
    \begin{proof}[Proof of Proposition \ref{prop_lb_insta_rate}]
        From Lemma \ref{lem_lb_a_b}, let $A=U_1,B=V^T$, we have $A^TA-BB^T=D$, thus
        \begin{align}
            &\; \lambda_{m}(VV^T)\geq \frac{-\bar{\lambda}_++\underline{\lambda}_-+\sqrt{(\bar{\lambda}_++\underline{\lambda}_-)^2+4\sigma_m^2(U_1V^T)}}{2}\,,\label{eq_pf_prop1_1}\\
            &\;\bar{\lambda}_+=\max\{\lambda_1(D),0\},\ \underline{\lambda}_-=\max\{\lambda_m(-D),0\}\,,\nonumber
        \end{align}
        then let $A=V,B=U_1^T$, we have $A^TA-BB^T=-D$, thus
        \begin{align}
            &\;\lambda_{r}(U_1U_1^T)\geq \frac{-\bar{\lambda}_-+\underline{\lambda}_++\sqrt{(\bar{\lambda}_-+\underline{\lambda}_+)^2+4\sigma^2_r(VU_1^T)}}{2}\,.\label{eq_pf_prop1_2}\\
            &\; \bar{\lambda}_-=\max\{\lambda_1(-D),0\},\ \underline{\lambda}_+=\max\{\lambda_r(D),0\}\nonumber
        \end{align}
        Now rewrite the lowerbounds \eqref{eq_pf_prop1_1}\eqref{eq_pf_prop1_2} in terms of 
        $$
            \Delta_+:=\bar{\lambda}_+-\underline{\lambda}_+,\ \Delta_-:=\bar{\lambda}_--\underline{\lambda}_-,\ \underline{\Delta}:=\underline{\lambda}_++\underline{\lambda}_-\,,
        $$
        we have
        \begin{align*}
            \lambda_{m}(VV^T)\geq \frac{-\bar{\Delta}_++\underline{\lambda}_--\underline{\lambda}_++\sqrt{(\Delta_++\underline{\Delta})^2+4\sigma_m^2(U_1V^T)}}{2}\,,\\
            \lambda_{r}(U_1U_1^T)\geq \frac{-\bar{\Delta}_-+\underline{\lambda}_+-\underline{\lambda}_-+\sqrt{(\Delta_-+\underline{\Delta})^2+4\sigma^2_r(VU_1^T)}}{2}\,.
        \end{align*}
        Then \eqref{eq_prop_lb_insta_rate} follows immediately from Lemma \ref{lem_lb_u_v}.
    \end{proof}
    
    Again, regarding the bound in Proposition \ref{prop_lb_insta_rate}, $\Delta_+, \Delta_-, \underline{\Delta}$ are time-invariant because the imbalance $D$ is so, and the singular value $\sigma_m^2(U_1V^T)$ can be controlled via positive margin. This proves Theorem \ref{thm_conv_lin_net}.
    \begin{proof}[Proof of Theorem \ref{thm_conv_lin_net}]
        When $m=r$, one have $\sigma_m(U_1V^T)=\sigma_r(U_1V^T)=\sigma_{\min}(U_1V^T)$. When $m>r$, we only need to lower bound $\sigma_m(U_1V^T)$ since $\sigma_r(U_1V^T)=0$, and vise versa when $r>m$.
        
        Therefore, without loss of generality, we assume $m\leq r$ and derive the lower bound on $\sigma_m(U_1V^T)$. By $\|A\|_F\geq \|A\|_2$ and Weyl's inequality~\citep[7.3.P16]{Horn:2012:MA:2422911}, one has
        \ben
            \|\tilde{Y}-\Sigma_x^{1/2}U_1V^T\|_F+\sigma_m(\Sigma_x^{1/2}U_1V^T)\geq \|\tilde{Y}-\Sigma_x^{1/2}U_1V^T\|_2+\sigma_m(\Sigma_x^{1/2}U_1V^T)\geq \sigma_m(\tilde{Y})\,,
        \een
        from which one obtain the lower bound
        \ben
            \sigma_m(U_1V^T)\geq \sigma_m(\Sigma_x^{1/2}U_1V^T)/\lambda^{1/2}_1(\Sigma_x)\geq (\sigma_m(\tilde{Y})-\|\tilde{Y}-\Sigma_x^{1/2}U_1V^T\|_F)/\lambda_1^{1/2}(\Sigma_x)\,.
        \een
        The lower bound is trivial when $\sigma_m(\tilde{Y})-\|\tilde{Y}-\Sigma_x^{1/2}U_1V^T\|_F<0$, thus we could write
        \be
            \sigma_m(U_1V^T)\geq \max\{ \sigma_m(\tilde{Y})-\|\tilde{Y}-\Sigma_x^{1/2}U_1V^T\|_F,0\}/\lambda_1^{1/2}(\Sigma_x)\,.
        \ee
        Now because $\|\tilde{Y}-\Sigma_x^{1/2}U_1V^T\|_F=\sqrt{2\tilde{\mathcal{L}}}$ is non-decreasing under gradient flow, we have $\forall t\geq 0$,
        \begin{align}
            \sigma_m^2(U_1(t)V^T(t))&\geq\; (\max\{ \sigma_m(\tilde{Y})-\|\tilde{Y}-\Sigma_x^{1/2}U_1(t)V^T(t)\|_F,0\})^2/\lambda_1(\Sigma_x)\nonumber\\
            &\geq\; (\max\{ \sigma_m(\tilde{Y})-\|\tilde{Y}-\Sigma_x^{1/2}U_1(0)V^T(0)\|_F,0\})^2/\lambda_1(\Sigma_x)\,.\label{eq_thm_conv_pf}
        \end{align}
        Finally using \eqref{eq_thm_conv_pf} to further lower bound \eqref{eq_prop_lb_insta_rate} in Proposition \ref{prop_lb_insta_rate}, we have our desired lower bound on the instantaneous rate
        \ben
            -\frac{\dot{\tilde{\mathcal{L}}}}{\tilde{\mathcal{L}}}\geq \lambda_r(\Sigma_x)c(0)\,.
        \een
        The result $\tilde{\mathcal{L}}(t)\leq \exp(-\lambda_r(\Sigma_x)c(0)t)\tilde{\mathcal{L}}(0)$ follows from Gr\"onwall's inequality~\citep{gronwall1919}.
    \end{proof}
    
    \section{Implicit Bias of Gradient Flow on Single-Hidden-Layer Linear Network}\label{sec:gen_lin_net}
    \hancheng{This section is mostly the same as icml}
    In this section, we study a particular type of implicit bias of single-hidden-layer linear networks under gradient flow. We have assumed that $n>r=\mathrm{rank}(X)$, hence
    the regression problem \eqref{eq_lin_reg} has infinitely many solutions $\Theta^*$  that achieve optimal loss. Among all these solutions, 
    one that is of particular interest in high-dimensional linear regression is
    the \emph{minimum norm solution} (min-norm solution)
    \begin{align}
        \hat{\Theta}&=\;\underset{\Theta\in\mathbb{R}^{n\times m}}{\arg\,\min}\{\|\Theta\|_F:\|Y-X\Theta\|_F^2=\underset{\Theta}{\min}\|Y-X\Theta\|_F^2\}\nonumber\\
        &=\;X^T(XX^T)^\dagger Y ,
    \end{align}
    which has near-optimal generalization error for suitable data models~\citep{bartlett2020benign,mei2019generalization}. Here, we study conditions under which our trained network is equal or close to the min-norm solution by showing how the initialization explicitly controls the trajectory of the training parameters to be exactly (or approximately) confined within some low-dimensional invariant set. In turn,  minimizing the loss over this set leads to the min-norm solution.
    
    
    \subsection{Decomposition of Trained Network}\label{secc:decomp_net_gen}
    Notice that the end-to-end matrix $UV^T\in\mathbb{R}^{D\times m}$ associated with the single-hidden-layer linear network can be decomposed according to the SVD of data matrix $X$, \eqref{eq_svd_x}, as
    \be
        UV^T=(\Phi_1\Phi_1^T+\Phi_2\Phi_2^T)UV^T=\Phi_1U_1V^T+\Phi_2U_2V^T\,,
    \ee
    where $\Phi_1,\Phi_2,U_1,U_2$ are defined in Section~\ref{sec:conv_grad_flow}. The $j$-th column of $UV^T$, $[UV^T]_{:, j}$, is the linear predictor for the $j$-th output $y_j$, and is decomposed into two components within complementary subspaces $\mathrm{span}(\Phi_1)$ and $\mathrm{span}(\Phi_2)$. Moreover $[U_1V^T]_{:, j}$ is the coordinate of $[UV^T]_{:, j}$ w.r.t. the orthonormal basis consisting of the columns of $\Phi_1$, and similarly $[U_2V^T]_{:, j}$ is the coordinate w.r.t. basis $\Phi_2$. Under gradient flow \eqref{eq_gf_rp}, the trajectory $U(t)V(t)^T, t>0$ is fully determined by the trajectory $U_1(t)V^T(t),U_2(t)V^T(t),t>0$.
    
    \noindent
    \textbf{Convergence of Training Parameters}. We have derived useful results regarding $U_1(t)V^T(t)$ for $t>0$ in Section \ref{sec:conv_grad_flow}. When the condition in Theorem \ref{thm_conv_lin_net} is satisfied, exponential convergence of the loss implies $U_1(t)V^T(t)$ converges to some stationary point, as stated in the following proposition
    \begin{proposition}\label{prop_conv_stationary}
        Consider the continuous dynamics in \eqref{eq_gf_rp}. If $c(0)$, defined in Theorem \ref{thm_conv_lin_net}, is positive, then $V(t), U_1(t),t>0$ converges to an equilibrium point $V(\infty),U_1(\infty)$ such that $E(V(\infty),U_1(\infty))=W^TY-\Sigma_x^{1/2}U_1V^T=0$.
    \end{proposition}
    This Proposition is due to the fact that the states in gradient dynamics either converge to an equilibrium point or having its norm grow to infinity, and the exponential convergence excludes the later case. We left its proof to Appendix \ref{app_pf_thm2}. 
    
    Knowing that $V(t),U_1(t)$ converges, it is easy to check that
    \ben
        \Phi_1U_1(\infty)V^T(\infty)=\Phi_1\Sigma_x^{-1/2}W^TY=X^T(XX^T)^\dagger Y=\hat{\Theta}\,.
    \een
    For $U_2(t)V^T(t)$, notice that $\dot{U}_2(t)=0$ in dynamics \eqref{eq_gf_rp}, hence $U_2(t)=U_2(0),\forall t>0$. Overall, given exponential convergence of the loss, $U(t)V^T(t)$ converges to some $U(\infty)V^T(\infty)$ and
    \begin{align}
         U(\infty)V^T(\infty)=\Phi_1U_1(\infty)V^T(\infty)+\Phi_2U_2(0)V^T(\infty)=\hat{\Theta}+\Phi_2U_2(0)V^T(\infty)\,.\label{eq_trained_net_decomp}
    \end{align}
    
    \noindent
    \textbf{Constrained Training via Initialization}. Based on our analysis above, initializing $U_2(0)$ such that $U_2(0)V^T(\infty)=0$ in the limit, guarantees convergence to the min-norm solution via \eqref{eq_trained_net_decomp}. However, this is not easily achievable, as one needs to know a priori $V(\infty)$. Instead, we can show that by choosing a proper initialization, one can constrain the trajectory of the matrix $U(t)V^T(t)$ to lie identically in the set $\Phi_2^TU_2(t)V^T(t)\equiv 0$ for all $t\geq0$, thus the min-norm solution is obtained upon convergence, as suggested by the following proposition.
    \begin{proposition}\label{prop_inv_set}
        Let $V(t),U_1(t),U_2(t), t>0$ be the solution of \eqref{eq_gf_rp} starting from some $V(0),U_1(0),U_2(0)$. We assume $V(t),U_1(t), t>0$ converges to some $V(\infty),U_1(\infty)$ with $E(V(\infty),U_1(\infty))=0$. If the initialization satisfies
        \be
            V(0)U_2^T(0)=0,\ U_1(0)U_2^T(0)=0\,,\label{eq_orth_constraint}
        \ee
        then we have
        $$
            U(\infty)V^T(\infty)=\hat{\Theta}\,.
        $$
    \end{proposition}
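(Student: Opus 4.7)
The starting point is the decomposition in equation (\ref{eq_trained_net_decomp}), which, together with $E(V(\infty),U_1(\infty))=0$ and the analysis right before it, already gives
$$ U(\infty)V^T(\infty) \;=\; \hat{\Theta} + \Phi_2 U_2(0) V^T(\infty). $$
So the entire content of the proposition reduces to showing that $U_2(0) V^T(\infty)=0$, or equivalently $V(\infty) U_2^T(0)=0$. My plan is to prove the stronger statement that $V(t) U_2^T(0)\equiv 0$ for all $t\geq 0$, i.e.\ the orthogonality condition at initialization is propagated by the gradient flow.

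The key observation is that, by \eqref{eq_gf_rp}, $\dot U_2(t)=0$, so $U_2(t)=U_2(0)$ for all $t$. Hence, if I right-multiply the $V$ and $U_1$ equations of \eqref{eq_gf_rp} by the constant matrix $U_2^T(0)$ and define
$$ A(t):=V(t)\,U_2^T(0), \qquad B(t):=U_1(t)\,U_2^T(0), $$
then
\begin{align*}
\dot A(t) &= E^T(t)\,\Sigma_x^{1/2}\, B(t), \\
\dot B(t) &= \Sigma_x^{1/2}\, E(t)\, A(t),
\end{align*}
where $E(t)=E(V(t),U_1(t))$ depends only on $V(t),U_1(t)$ and is therefore a \emph{known} time-varying coefficient from the point of view of $(A,B)$. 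Thus $(A,B)$ satisfies a linear, homogeneous, time-varying ODE.

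By hypothesis \eqref{eq_orth_constraint}, the initial condition is $A(0)=V(0)U_2^T(0)=0$ and $B(0)=U_1(0)U_2^T(0)=0$. Since the zero trajectory is clearly a solution of this linear homogeneous system with zero initial data, and solutions of a locally Lipschitz linear ODE are unique, we conclude $A(t)\equiv 0$ and $B(t)\equiv 0$ for all $t\geq 0$. Passing to the limit yields $V(\infty) U_2^T(0)=0$, and substituting back into \eqref{eq_trained_net_decomp} gives $U(\infty)V^T(\infty)=\hat\Theta$, as claimed.

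I do not expect any genuine obstacle here: the argument is essentially an ``invariance of a subspace under a linear cocycle'' fact, enabled by the crucial structural property $\dot U_2=0$, which decouples the $U_2$ direction from the error dynamics. The only subtlety worth mentioning is that $E(t)$ is not a priori bounded on $[0,\infty)$, but linear ODEs admit unique solutions on any interval of existence regardless, so local uniqueness on every compact subinterval of $[0,\infty)$ is enough to propagate the zero initial condition to all times at which $V,U_1$ are defined, which by assumption includes the limit.
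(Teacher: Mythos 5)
Your proposal is correct and follows essentially the same route as the paper: both right-multiply the $(V,U_1)$ dynamics by the constant $U_2^T(0)$ to obtain the linear time-varying system \eqref{eq_proj_dym} for $(V(t)U_2^T(0),\,U_1(t)U_2^T(0))$, observe that the zero initial condition forces the trajectory to remain zero, and conclude via \eqref{eq_trained_net_decomp}. Your explicit appeal to uniqueness of solutions of the linear ODE is a slightly more careful justification of the step the paper phrases as ``zero is an equilibrium point,'' but the argument is identical in substance.
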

    \begin{proof}
        From \eqref{eq_gf_rp} we have
        \be
            \frac{d}{dt}\bmt VU_2^T\\ U_1U_2^T\emt = \bmt 0&  E^T\Sigma^{1/2}_x\\
                \Sigma^{1/2}_xE & 0\emt\bmt VU_2^T\\ U_1U_2^T\emt\,.\label{eq_proj_dym}
        \ee
        Since $VU_2^T=0,\ U_1U_2^T=0$ is an equilibrium point of \eqref{eq_proj_dym}, we have $V(t)U_2^T(0)=0,\forall t\geq 0$ under the initialization in \eqref{eq_orth_constraint}, hence $V(\infty)U_2^T(0)=0$. From \eqref{eq_trained_net_decomp} we conclude that $U(\infty)V^T(\infty)=\hat{\Theta}$.
    \end{proof}
    
    In the standard linear regression, where $\Theta$ follows the gradient flow on $\mathcal{L}(\Theta)=\frac{1}{2}\|Y-X\Theta\|^2_F$, it is well-known that if the columns of $\Theta(0)$ are initialized in $\mathrm{span}(\Phi_1)$, namely $\Theta^T(0)\Phi_2=0$, then $\Theta(\infty)=\hat{\Theta}$. Proposition \ref{prop_inv_set} is the extension of such results to the overparameterized setting. It is worth-noting that initializing the columns of $U(0)V^T(0)$ in $\mathrm{span}(\Phi_1)$, namely $V(0)U_2^T(0)=0$ is no longer sufficient for obtaining $\hat{\Theta}$ as the trained network, and additional condition $U_1(0)U_2^T(0)=0$ is required. 
    
    Here the orthogonality constraints \eqref{eq_orth_constraint} defines an invariant subset of the parameter space $\{V,U: VU_2^T=0,U_1U_2^T=0\}$ under the gradient flow. Proposition \ref{prop_inv_set} shows that given an initialization within the invariant set, the trained network (after convergence) is exactly the min-norm solution, which is the only minimizer in the invariant set.
    
    While in practice we can make the initialization exactly as above, such choice is data-dependent and requires the SVD of the data matrix $X$. Moreover, we note that while the zero initialization works for the standard linear regression case, such initialization $V(0)=0,U(0)=0$ is bad in the overparametrized case because it is an equilibrium point of the gradient flow, even though it satisfies the orthogonal condition $V(0)U_2^T(0)=0$ and $U_1(0)U_2^T(0)=0$.
    
    In the next section, we show that under (properly scaled) random initialization and sufficiently large hidden layer width $h$, both conditions for convergence and implicit bias on initialization are probably approximately satisfied, i.e., with high probability the level of imbalance is sufficient for exponential convergence, and the parameters are initialized close to the invariant set, allowing us to obtain a non-asymptotic bound between the trained network and the min-norm solution.
    
    
    \subsection{Wide Single-Hidden-Layer Linear Network}\label{secc:wide_linear_net}
    In this section, we show how the previously mentioned conditions for convergence and implicit bias, i.e., high imbalance and orthogonality, are approximately satisfied with high probability under the following initialization 
    \begin{align*}
        &\;[U(0)]_{ij}\sim \mathcal{N}\lp 0,\frac{1}{h^{2\alpha}}\rp,\ 1\leq i\leq n, 1\leq j\leq h\,,\\
        &\;[V(0)]_{ij}\sim \mathcal{N}\lp 0,\frac{1}{h^{2\alpha}}\rp,\ 1\leq i\leq m, 1\leq j\leq h\,,
    \end{align*}
    where all the entries are independent and $1/4\leq \alpha\leq 1/2$. 
    
    Both our parametrization and initialization are, at first sight, different from the one used in previous works~\citep{jacot2018neural,du2019width,arora2019exact} on NTK analysis for wide neural networks. We note that with time-rescaling, however, we can relate our initialization to the one in~\citet{arora2019exact}. Please see Appendix \ref{app_comp_ntk} for a comparison.

    Recall form the last section, one can obtain exactly min-norm solution via proper initialization of the single-hidden-layer network. In particular, it requires 1) convergence of the error $E$ to zero; and 2) the orthogonality conditions $V(0)U_2^T(0)=0$ and $U_1(0)U_2^T(0)=0$. Under random initialization and sufficiently large hidden layer width $h$, these two conditions are approximately satisfied. Using basic random matrix theory, one can show the following lemma. See Appendix \ref{app_pf_thm2} for the proof.
    \begin{lemma}\label{lem_si_no} Let $\frac{1}{4}<\alpha\leq \frac{1}{2}$. Given data matrix $X$. $\forall \delta\in(0,1)$, $\forall h>h_0=poly\lp m,n,\frac{1}{\delta}\rp$, with probability at least $1-\delta$ over random initialization with $[U(0)]_{ij},[V(0)]_{ij}\sim\mathcal{N}(0,h^{-2\alpha})$, the following conditions hold:
    \begin{enumerate}[leftmargin=4mm]
        \item (Sufficient level of imbalance)
        \begin{align}
            \underline{\Delta}(0)>h^{1-2\alpha}\,,\label{lem_si_no_eq1}
        \end{align}
        where $\underline{\Delta}$ is the effective level of imbalance defined in \eqref{eq_def_eigs3}.
        \item (Approximate orthogonality)
        \be
        \lV\bmt V(0)U_2^T(0)\\
        U_1(0)U_2^T(0)\emt\rV_F\leq  2\sqrt{m+r}\frac{\sqrt{m+n}+\frac{1}{2}\log \frac{2}{\delta}}{h^{2\alpha-\frac{1}{2}}}\,,\label{lem_si_no_eq2}
        \ee
        \be
        \lV U_1(0)V^T(0)\rV_F\leq 2 \sqrt{m}\frac{\sqrt{m+n}+\frac{1}{2}\log \frac{2}{\delta}}{h^{2\alpha-\frac{1}{2}}}\,.\label{lem_si_no_eq3}
        \ee
    \end{enumerate}
    \end{lemma}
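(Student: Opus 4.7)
The plan is to prove each of the three claimed inequalities using non-asymptotic Gaussian random matrix theory. The pivotal observation is that, by rotational invariance of the isotropic Gaussian, the stacked matrix $[U_1(0)\,;\,U_2(0)] = [\Phi_1\,,\,\Phi_2]^T U(0)$ has i.i.d.\ $\mathcal{N}(0,h^{-2\alpha})$ entries. Consequently $U_1(0) \in \mathbb{R}^{r\times h}$, $U_2(0)\in\mathbb{R}^{(D-r)\times h}$, and $V(0)\in\mathbb{R}^{m\times h}$ are three \emph{mutually independent} Gaussian matrices with i.i.d.\ $\mathcal{N}(0,h^{-2\alpha})$ entries, and this independence is what allows all three conditions to be controlled with the same initialization.

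\textbf{Imbalance bound.} To prove \eqref{lem_si_no_eq1}, I would invoke the variational formula
\[
\lambda_r(\Lambda(0))\,=\,\max_{\dim S=r}\ \min_{x\in S,\,\|x\|=1}\,x^T\Lambda(0)\,x,
\]
and choose $S$ to be the row space of $U_1(0)$ (of dimension $r$ almost surely). Writing $P\in\mathbb{R}^{h\times r}$ for an orthonormal basis of $S$, this lower bounds $\lambda_r(\Lambda(0))$ by $\sigma_r(U_1(0))^2 - \|V(0)P\|_2^2$. Since $V(0)$ is independent of $U_1(0)$ and $P$ has orthonormal columns, conditionally on $U_1(0)$ the matrix $V(0)P$ is an $m\times r$ i.i.d.\ $\mathcal{N}(0,h^{-2\alpha})$ Gaussian matrix. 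Applying Davidson--Szarek bounds to the standardized matrices $h^\alpha U_1(0)$ and $h^\alpha V(0)P$, with probability at least $1-\delta/3$ one has
\[
\sigma_r(U_1(0))^2 \ \geq\ h^{1-2\alpha}\bigl(1-\tfrac{\sqrt{r}+t}{\sqrt{h}}\bigr)^{\!2}\,,\qquad \|V(0)P\|_2^2 \ \leq\ h^{-2\alpha}\bigl(\sqrt{m}+\sqrt{r}+t\bigr)^2\,,
\]
with $t=O(\sqrt{\log(1/\delta)})$. Choosing $h\geq h_0=\mathrm{poly}(m,r,\log(1/\delta))$ makes the first term at least $\tfrac34 h^{1-2\alpha}$ and the second at most $\tfrac14 h^{1-2\alpha}$, so $\lambda_r(\Lambda(0))\geq \tfrac12 h^{1-2\alpha}$. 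The symmetric argument (swap the roles of $U_1(0)$ and $V(0)$) yields $\lambda_m(-\Lambda(0))\geq \tfrac12 h^{1-2\alpha}$; summing proves \eqref{lem_si_no_eq1}.

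\textbf{Approximate orthogonality.} For \eqref{lem_si_no_eq2}, set $A:=[V(0)\,;\,U_1(0)]\in\mathbb{R}^{(m+r)\times h}$, which is independent of $U_2(0)$ by the setup above. Conditioning on $A$, each of the $D-r$ columns of $AU_2^T(0)$ is an independent $\mathcal{N}(0,h^{-2\alpha}AA^T)$ vector, so
\[
\mathbb{E}\bigl[\|AU_2^T(0)\|_F^2\,\big|\,A\bigr] \,=\, (D-r)\,h^{-2\alpha}\,\|A\|_F^2\,.
\]
I would combine (i) a Hanson--Wright / Laurent--Massart inequality for the conditional deviation of $\|AU_2^T(0)\|_F^2$ around its conditional mean, with (ii) a chi-square concentration for $\|A\|_F^2\sim h^{-2\alpha}\chi^2_{(m+r)h}$. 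Together they give, with probability at least $1-\delta/3$,
\[
\|AU_2^T(0)\|_F \ \leq\ 2\sqrt{m+r}\,\frac{\sqrt{m+D}+\tfrac12\log(2/\delta)}{h^{2\alpha-1/2}}\,,
\]
using $\sqrt{D-r}\leq\sqrt{m+D}$ and absorbing the concentration tails into the additive $\tfrac12\log(2/\delta)$ term. An identical argument with the independent pair $(U_1(0),V(0))$ proves \eqref{lem_si_no_eq3}, with the $\sqrt{m}$ prefactor arising from $\mathrm{rank}(U_1(0)V^T(0))\leq m$ and $\sqrt{rm}\leq\sqrt{m(m+D)}$.

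\textbf{Main obstacle.} The substantive difficulty is keeping numerical constants tight enough to reproduce the stated bounds exactly: in particular, the imbalance lower bound of $\tfrac12 h^{1-2\alpha}$ must strictly beat the perturbation $\|V(0)P\|_2^2$ for an \emph{explicit} polynomial $h_0$, and the Hanson--Wright/chi-square tails must be reshaped into the specific $\sqrt{m+D}+\tfrac12\log(2/\delta)$ form. Neither task requires tools beyond Davidson--Szarek and sub-exponential concentration, but the bookkeeping is nontrivial, and a three-way union bound is needed at the end to ensure all three events hold simultaneously with probability at least $1-\delta$.
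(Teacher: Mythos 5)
Your proposal is sound and would deliver bounds of the stated form, but it follows a genuinely different route from the paper. The paper conditions on a \emph{single} Davidson--Szarek event: the concentration of all singular values of the stacked $(m+D)\times h$ Gaussian matrix $\bigl[V(0)^T \ U(0)^T\bigr]$ around $h^{1/2-\alpha}$. All three claims then follow \emph{deterministically} from that one event: for the imbalance, $\Lambda(0)$ is written as $\bigl[V^T\ U^T\bigr]\,\mathrm{diag}(-I_m,\Phi_1\Phi_1^T)\,\bigl[V^T\ U^T\bigr]^T$ and a Weyl-type inequality gives $\sigma_{r+m}(\Lambda(0))\geq \sigma_{m+D}^2(\bigl[V^T\ U^T\bigr])$, combined with an inertia argument showing $\Lambda(0)$ has at most $r$ positive and $m$ negative eigenvalues; for the orthogonality, both cross-products are compressions of $\bigl[V;U\bigr]\bigl[V^T\ U^T\bigr]-\eta I_{m+D}$ for \emph{any} $\eta$ (because the bracketing projections annihilate the identity), and choosing $\eta$ at the midpoint of the squared-singular-value interval yields exactly the $2(\sqrt{m+D}+\tfrac12\log\tfrac2\delta)/h^{2\alpha-1/2}$ factor --- no union bound and no independence argument is ever invoked. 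You instead exploit the mutual independence of $U_1(0),U_2(0),V(0)$ (correctly established via rotational invariance), prove the imbalance bound by a variational perturbation argument on the row space of $U_1(0)$, and prove the orthogonality bounds by conditional Gaussianity plus Hanson--Wright/chi-square concentration, closing with a three-way union bound. What your route buys: it makes the mechanism transparent (the cross terms are small \emph{because} the factors are independent, not merely because the Gram matrix is near-isotropic), it bounds $\lambda_r(\Lambda(0))$ and $\lambda_m(-\Lambda(0))$ separately and thereby sidesteps the paper's inertia/signature argument, and it correctly avoids the naive product bound $\|AU_2^T\|\leq\|A\|\|U_2\|$, which would lose the crucial $h^{-1/2}$ cancellation. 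What the paper's route buys: a single probabilistic event (hence no degradation from $\log(2/\delta)$ to $\log(6/\delta)$), purely linear-algebraic post-processing, and the exact stated constants without any sub-exponential tail bookkeeping. The one place you should be careful is the final constant-matching you yourself flag: with a $\delta/3$ union bound and Hanson--Wright second-order terms, reproducing the literal coefficients $2\sqrt{m+r}$ and $\tfrac12\log\tfrac2\delta$ requires either loosening the stated constants or adopting the paper's single-event trick.
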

    From \eqref{lem_si_no_eq2}, we know that the parameters are initialized close to the invariant set of our interest, as measured by $\|VU_2^T\|_F+\|U_1U_2^T\|_F$. The dynamics \eqref{eq_proj_dym} quantify at time $t$ how fast this measure can maximally increase given that its current value is non-zero. It is clear that the smaller norm the current error $E$ has, the lower is the rate at which this measure could increase. This suggests that as long as the error converges sufficiently fast, $\|VU_2^T\|_F+\|U_1U_2^T\|_F$ will not increase too much from its initial value. For our purpose, as the width $h$ increases, we need at least a constant rate of exponential convergence of the error (given by \eqref{lem_si_no_eq1}), and an initial error $E(0)$ that is bounded by some constant (derived from \eqref{lem_si_no_eq3}). With these conditions satisfied with high probability, we have the following Theorem regarding the implicit bias of wide linear networks.  We left its proof to Appendix \ref{app_pf_thm2}. 

    \begin{theorem}\label{thm_asymp_conv_min_norm}
        Let $\frac{1}{4}<\alpha\leq \frac{1}{2}$. Let $V(t),U(t), t>0$ be the trajectory of the continuous dynamics \eqref{eq_gf_rp} starting from some $V(0), U(0)$. Then, $\exists C>0$, such that $\forall \delta\in(0,1), \forall h>h_0^{1/(4\alpha-1)}$ with $h_0=poly\lp m,n,\frac{1}{\delta},\frac{\lambda_1(\Sigma_x)}{\lambda_r^3(\Sigma_x)}\rp$, with probability $1-\delta$ over random initializations  with $[U(0)]_{ij},[V(0)]_{ij}\sim\mathcal{N}(0,h^{-2\alpha})$, we have
        \be
            \|U(\infty)V^T(\infty)-\hat{\Theta}\|_2\leq 2C^{1/h^{1-2\alpha}}\sqrt{m+r}\frac{\sqrt{m+n}+\frac{1}{2}\log \frac{2}{\delta}}{h^{2\alpha-\frac{1}{2}}}\,.
        \ee
        Here $C=\exp\lp 1+ \frac{\lambda_1^{1/2}(\Sigma_x)}{\lambda_r(\Sigma_x)}\|Y\|_F\rp$, which depends on the data $X,Y$.
    \end{theorem}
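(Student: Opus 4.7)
The plan is to combine the three ingredients of Lemma~\ref{lem_si_no} with the decomposition \eqref{eq_trained_net_decomp} and Theorem~\ref{thm_conv_err} through a Gr\"onwall-type argument on the dynamics \eqref{eq_proj_dym}. First I would condition on the high-probability event of Lemma~\ref{lem_si_no}, which simultaneously provides (i) the imbalance lower bound $c \geq h^{1-2\alpha}$, (ii) the approximate orthogonality bound on $\|M(0)\|_F$ where $M(t) := [V(t)U_2^T(0); U_1(t)U_2^T(0)]$, and (iii) the bound on $\|U_1(0)V^T(0)\|_F$. By Theorem~\ref{thm_conv_err}, (i) guarantees convergence of $(V(t),U_1(t))$ to some $(V(\infty),U_1(\infty))$ with $E(V(\infty),U_1(\infty))=0$, so \eqref{eq_trained_net_decomp} gives
\[
 U(\infty)V^T(\infty) - \hat{\Theta} = \Phi_2 U_2(0) V^T(\infty),
\]
and since $\Phi_2$ has orthonormal columns, $\|U(\infty)V^T(\infty) - \hat{\Theta}\|_2 \leq \|U_2(0)V^T(\infty)\|_F \leq \|M(\infty)\|_F$. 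The problem therefore reduces to controlling how much $\|M(t)\|_F$ grows from $\|M(0)\|_F$.

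Next, I would bound the growth of $\|M(t)\|_F$ by taking the Frobenius norm in \eqref{eq_proj_dym}. The $2\times 2$ block operator appearing there has spectral norm $\|\Sigma_x^{1/2} E(t)\|_2 \leq \lambda_1^{1/2}(\Sigma_x)\|E(t)\|_F$, so
\[
 \tfrac{d}{dt}\|M(t)\|_F \;\leq\; \lambda_1^{1/2}(\Sigma_x)\,\|E(t)\|_F\,\|M(t)\|_F,
\]
and Gr\"onwall's inequality yields
\[
 \|M(\infty)\|_F \;\leq\; \|M(0)\|_F \cdot \exp\!\Bigl(\lambda_1^{1/2}(\Sigma_x)\!\int_0^\infty \|E(s)\|_F\,ds\Bigr).
\]
Now Theorem~\ref{thm_conv_err} combined with (i) gives $\|E(t)\|_F \leq \|E(0)\|_F \exp(-\lambda_r(\Sigma_x) h^{1-2\alpha} t)$, so the integral is bounded by $\|E(0)\|_F / (\lambda_r(\Sigma_x) h^{1-2\alpha})$, producing a multiplicative factor of the form $\exp\bigl(\tfrac{\lambda_1^{1/2}(\Sigma_x)}{\lambda_r(\Sigma_x)} \|E(0)\|_F \cdot h^{2\alpha-1}\bigr)$.

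It then remains to show that $\|E(0)\|_F$ is bounded by a data-dependent constant with high probability. Using the definition $E(0) = W^T Y - \Sigma_x^{1/2} U_1(0)V^T(0)$, the triangle inequality gives $\|E(0)\|_F \leq \|Y\|_F + \lambda_1^{1/2}(\Sigma_x)\|U_1(0)V^T(0)\|_F$, and the bound (iii) from Lemma~\ref{lem_si_no} makes the second term vanish as $h\to\infty$, so for $h > h_0^{1/(4\alpha-1)}$ that term is at most $1$. This makes $\|E(0)\|_F \leq \|Y\|_F + \lambda_1^{1/2}(\Sigma_x)$, and substituting into the Gr\"onwall bound yields a multiplicative factor of at most $C^{1/h^{1-2\alpha}}$ for $C = \exp(1 + \tfrac{\lambda_1^{1/2}(\Sigma_x)}{\lambda_r(\Sigma_x)}\|Y\|_F)$. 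Multiplying by the bound on $\|M(0)\|_F$ from (ii) gives exactly the claim.

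The main obstacle will be the Gr\"onwall step: one must carefully justify integrability of $\|E(t)\|_F$ (which follows from Theorem~\ref{thm_conv_err}), and one must be careful that the spectral norm of the block operator in \eqref{eq_proj_dym} is in fact $\|\Sigma_x^{1/2} E(t)\|_2$ rather than twice that, so that the constants line up with the target exponent. Matching the threshold $h > h_0^{1/(4\alpha-1)}$ with the requirement that $\lambda_1^{1/2}(\Sigma_x)\|U_1(0)V^T(0)\|_F \leq 1$ and that the Lemma~\ref{lem_si_no} bounds hold is a routine but delicate bookkeeping step, absorbing all of the polynomial dependence on $m,D,1/\delta,\lambda_1(\Sigma_x)/\lambda_r^3(\Sigma_x)$ into the redefined~$h_0$.
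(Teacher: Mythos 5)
Your proposal is correct and follows essentially the same route as the paper's proof: reduce to bounding $\|U_2(0)V^T(\infty)\|_F$, apply Gr\"onwall to the block dynamics \eqref{eq_proj_dym} with $\|A_Z(t)\|_2=\|\Sigma_x^{1/2}E(t)\|_2$, integrate the exponential decay of $\|E(t)\|_F$ from Theorem~\ref{thm_conv_err} under the event of Lemma~\ref{lem_si_no}, and control $\|E(0)\|_F$ via the bound on $\|U_1(0)V^T(0)\|_F$. The only difference is the final constant bookkeeping (the paper arranges for the $\|U_1(0)V^T(0)\|_F$ contribution to the exponent, rather than to $\|E(0)\|_F$, to be at most $1$, which is what produces the exact $C=\exp(1+\lambda_1^{1/2}(\Sigma_x)\lambda_r^{-1}(\Sigma_x)\|Y\|_F)$), a step you correctly flag as routine.
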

    Previous works~\citep{arora2019exact} show non-asymptotic results on bounding the difference of predictions between the trained network and the kernel predictor of the NTK over a finite number of testing point (non-global result) using more general network structure and activation functions. We work on a simpler model, we are able to study it without going through non-asymptotic NTK analysis, which is considerably more complicated than ours. We believe this theorem is a clear illustration of how overparametrization, in particular, in the hidden layer width, together with random initialization  affects the convergence and implicit bias.
    
    Notably, although our initialization is related to the NTK analysis~\citep{jacot2018neural,arora2019exact} and the kernel regime~\citep{chizat2019lazy}, we significantly simplify the non-asymptotic analysis with the exact charaterization of an invariant set tied to the regularized solution. Specifically, our analysis does not rely on approximating the training flow to one in the infinite width limit, or one from the linearized network at initialization. Instead, we have the exact characterization of the properties required to reach min-norm solution and show how such properties are approximately preserved during training. 
        
    \section{Numerical Experiments}\label{sec:num_expmt}
    In this section, we first illustrate how the imbalance quantities $\Delta_+,\Delta_-,\underline{\Delta}$ are obtained from the spectrum of the imbalance matrix, as well as the role of width in shaping the imbalance quantities under random initialization. Then we run gradient descent (with small step size) on linear regression problem to validate our lower bounds for the convergence rate. We also refer readers to Appendix \ref{app_expmt_lin_implicit} for numerical verification of our Theorem \ref{thm_asymp_conv_min_norm} on implicit bias of wide linear networks.
    \subsection{Imbalance Quantities}
    For simplicity, we consider the matrix factorization problem $\mathcal{L}=\frac{1}{2}\|Y-\frac{1}{\sqrt{mh}}UV^T\|_F^2$, $U\in\mathbb{R}^{r\times h},V\in\mathbb{R}^{m\times h}$ under Xavier initialization~\citep{pmlr-v9-glorot10a}. The scaling factor $\frac{1}{\sqrt{mh}}$ ensures that at initialization, the product $UV^T$ keeps the same scale as we vary the hidden layer width $h$. Our convergence results Proposition \ref{prop_lb_insta_rate} and Theorem \ref{thm_conv_lin_net} apply to this case and the imbalance quantities $\Delta_+,\Delta_-,\underline{\Delta}$ are defined from the imbalance matrix $D=U^TU-V^TV$ at initialization.
    
    When $h\geq n+m$, then with probability 1 under random initialization, the imbalance matrix $D$ has $rank(D)=n+m$ and it has $n$ positive eigenvalues and $m$ negative ones. Our experiment sets $n=20,m=5$ and consider the case of $h=30$ (small width) and $h=1000$ (large width). For initialization, we use $[U(0)]_{ij},[V(0)]_{ij}\sim\mathcal{N}(0,1)$.
    
    \begin{figure}[!h]
    \centering
    \subfloat{{\includegraphics[width=8cm]{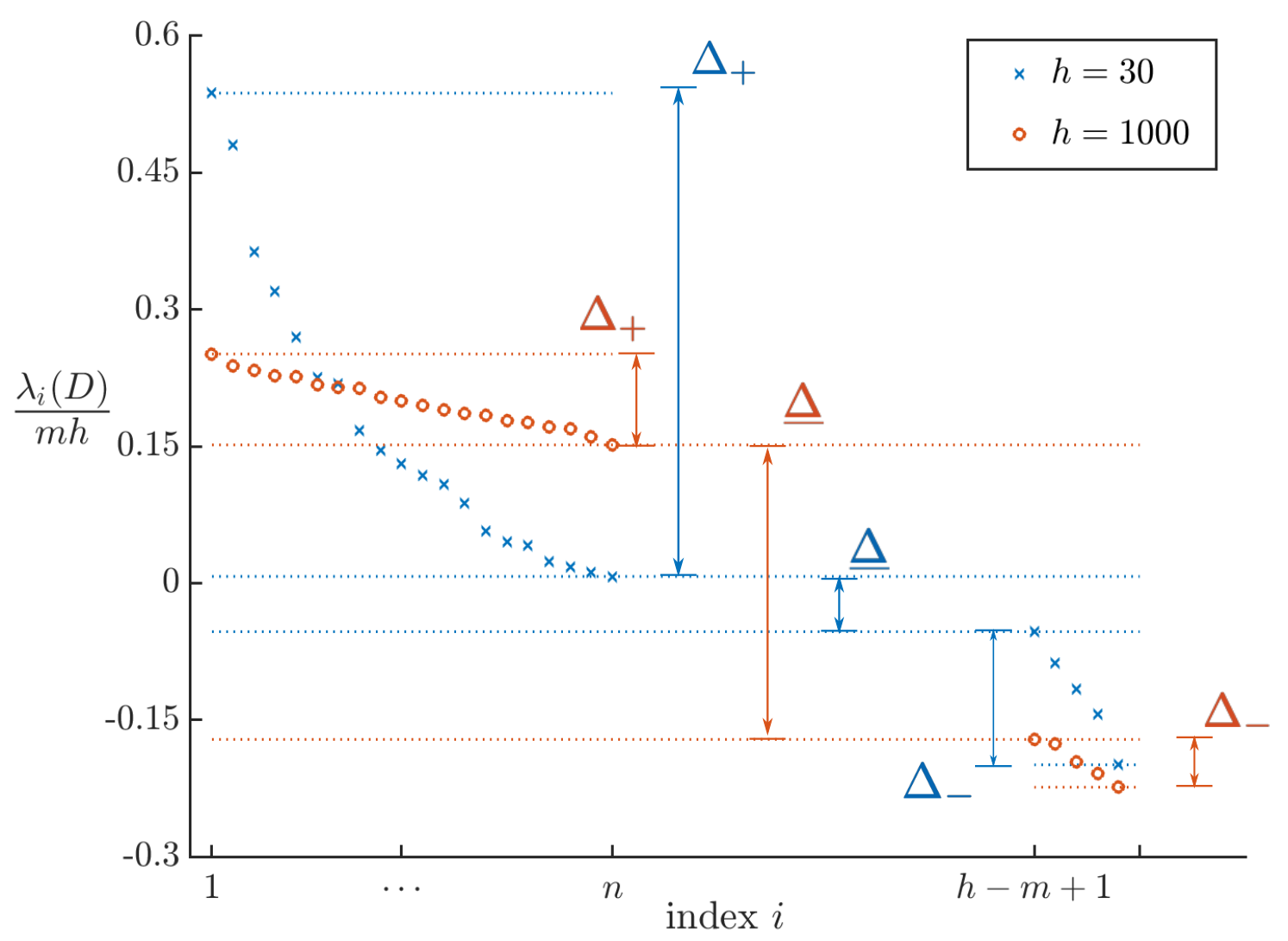} }}%
    \qquad
    \subfloat{{\includegraphics[width=6cm]{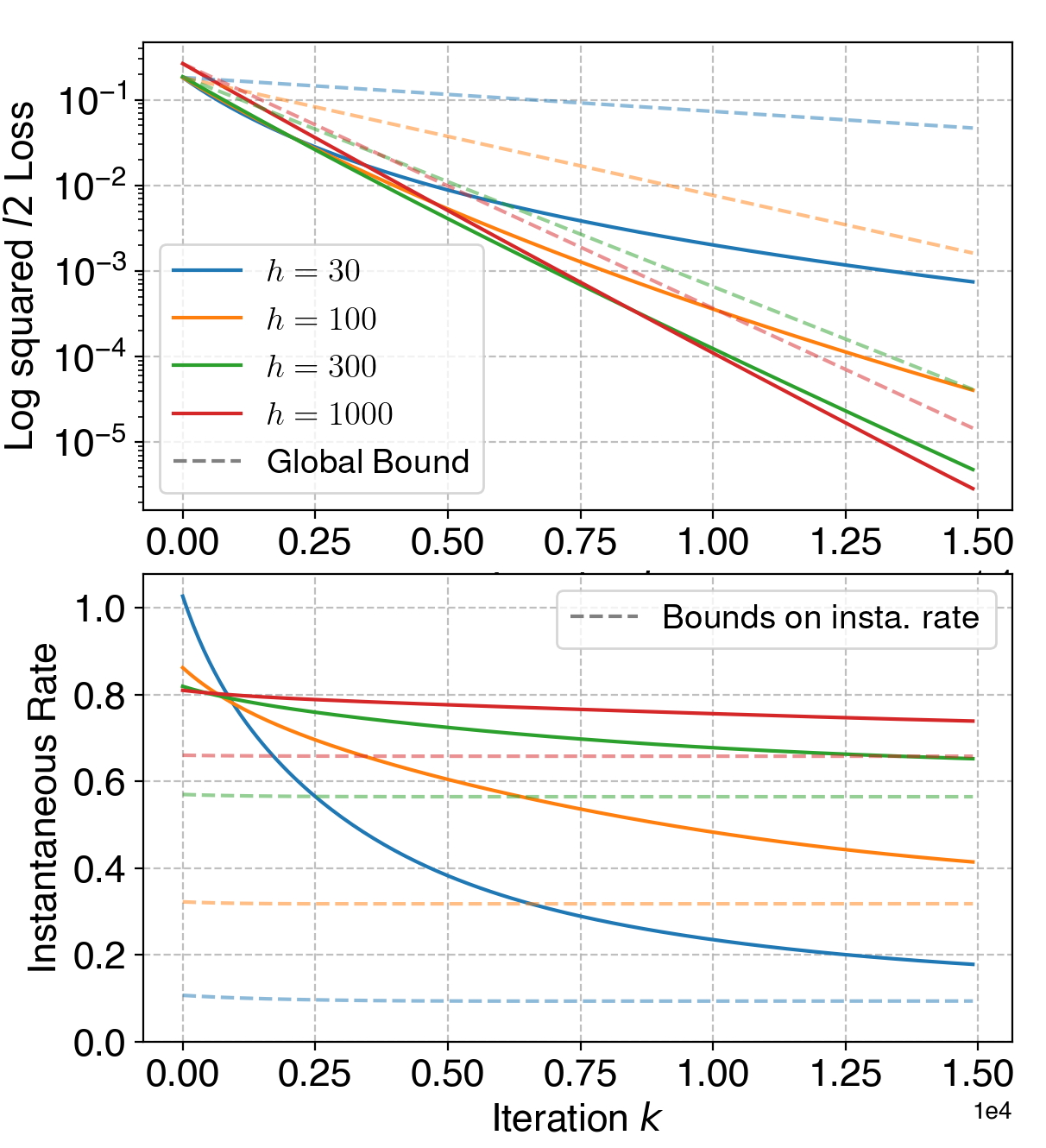} }}%
    \caption{(Left): Scaled eigenvalues of the imbalance matrix $D$ and the corresponding scaled imbalance quantities $\frac{1}{mh}\Delta_+,\frac{1}{mh}\Delta_-,\frac{1}{mh}\underline{\Delta}$ under random initialization, the scaling factor is omitted in the plot annotation for simplicity.\\
    (Right): Gradient descent on $\mathcal{L}=\frac{1}{2}\lV Y-\frac{1}{\sqrt{mh}}UV^T\rV_F^2$ for different network width. The dashed lines represent the bound provided by our results (Proposition \ref{prop_lb_insta_rate} and Theorem \ref{thm_conv_lin_net}).}\label{fig_imb_quant}
    \end{figure}

    Under Xavier initialization, the instantaneous rate is scaled by $\frac{1}{mh}$, hence we consider the scaled imbalance quantities, the details are given in Appendix \ref{app_imb_quant}. We plot in Figure \ref{fig_imb_quant_app} all the non-zero eigenvalues of imbalance $D$ and the imbalance quantities, scaled by $\frac{1}{mh}$. As illustrated by the plot, the imbalance quantities can be understood as the gaps between certain eigenvalues. It is clear that, compare to small width $h=50$, large width $h=1000$ has larger level of imbalance and smaller spectrum spread. 
    
    Moreover, as the width varies, the loss curve behaves differently:

    \noindent
    \emph{(Small width)}: When $h=30$, spectrum spreads $\Delta_-,\Delta_+$ are larger compared to the level of imbalance $\underline{\Delta}$. As we discussed in Section \ref{sec:conv_grad_flow} after Proposition \ref{prop_lb_insta_rate}, the lower bound on the rate is approximately $2\underline{\Delta}$, which is not a good global bound for the convergence rate (see the top plot in Figure \ref{fig_imb_quant_conv_app}). However, interestingly, the instantaneous rate (see the bottom plot in Figure \ref{fig_imb_quant_conv_app}) starts off at large value and decreases as training proceeds. At late stage of the training, our lower bound for the instantaneous rate is reasonably good.
    
    \noindent
    \emph{(Large width)}: When $h=1000$, the level of imbalance $\underline{\Delta}$ is larger compared to spectrum spreads $\Delta_-,\Delta_+$. In this case $2\underline{\Delta}$ is a good global bound on the convergence rate (see the top plot in Figure \ref{fig_imb_quant_conv_app}). As for the instantaneous rate, there is no significant variation in the rate and our bound Proposition \ref{prop_lb_insta_rate} is reasonably good during training.
    
    Such observation hints more complicated relations between imbalance quantities and the training dynamics. We refer the readers to Appendix \ref{app_imb_quant} for more detailed discussion.
    
    \subsection{Convergence via Imbalanced Initialization}
    We train the linear network using gradient descent with a fixed small step size on the averaged loss $\mathcal{L}(U,V)=\|Y-XUV\|^2_F/n$. We use the initialization $U(0)=\sigma_UU_0,V(0)=\sigma_VV_0$ for some randomly sampled $U_0,V_0$ with i.i.d. standard normal entries, and scalars $\sigma_U$, $\sigma_V$. Under this setting, we can change the relative scales of $\sigma_U,\sigma_V$ but keep their product fixed, so that we obtain initializations with different level of imbalance \hl{$c$} while keeping the initial end-to-end matrix $U(0)V^T(0)$ fixed. To eliminate the effect of ill-conditioned $\Sigma_x$ on the convergence, we have $\Sigma_x=I_r$ in this experiment. 

    For comparison, we also consider the balanced initialization that corresponds to the same end-to-end matrix. For a given $\Theta(0)=U(0)V^T(0)$, we choose an arbitrary $Q\in\mathbb{R}^{h\times m}$ with $Q^TQ=I_m$, then a balanced initialization is given by
    \begin{align*}
        U_\mathrm{balanced}(0)&=\;\Theta(0)\lhp \Theta^T(0)\Phi_1\Phi_1^T\Theta(0)\rhp^{-1/4}Q^T,\\
        V_\mathrm{balanced}(0)&=\;\lhp \Theta^T(0)\Phi_1\Phi_1^T\Theta(0)\rhp^{1/4}Q\,.
    \end{align*}
    Such initialization ensures the imbalanced is the zero matrix while keeping the end-to-end matrix as $\Theta(0)$. We note here the choice of $Q$ does not affect the error trajectory $E(t)$, hence the loss $\mathcal{L}(t)$.

    \begin{figure}[ht]
    \centering
    \includegraphics[width=.95\linewidth]{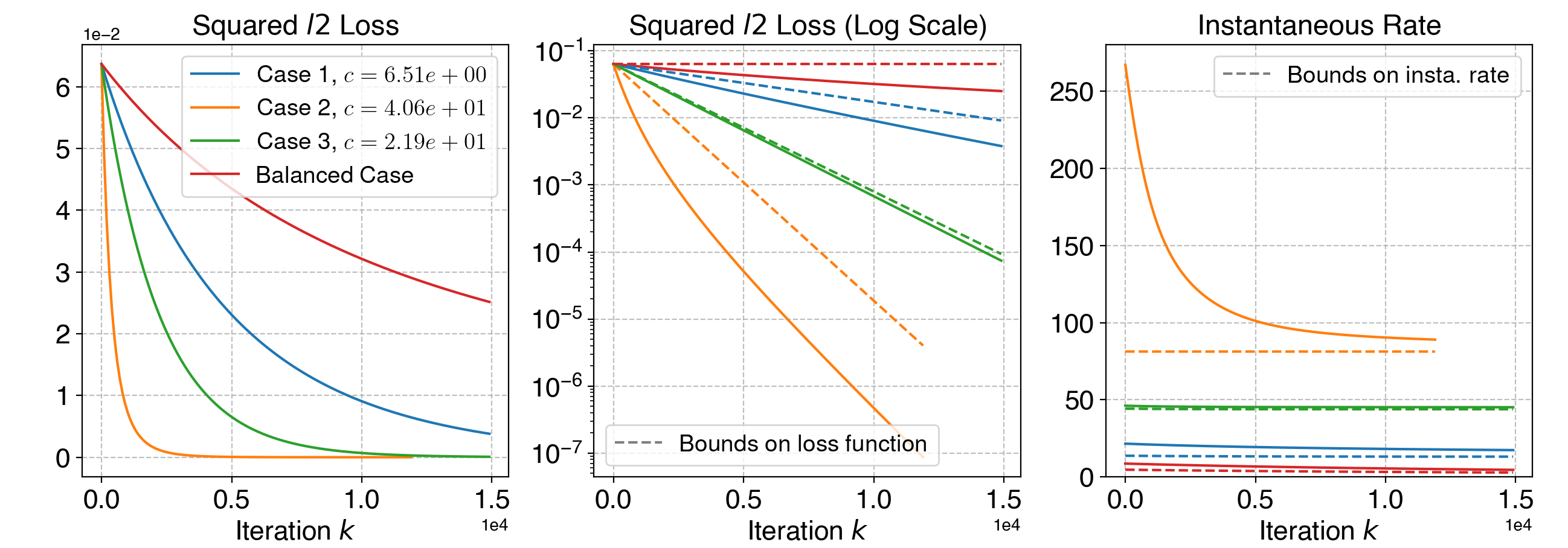}
    \caption{Convergence of gradient descent on linear networks with different initial imbalance matrices. We plot the loss function $\mathcal{L}$ on the left (Regular scale) and the middle(Log scale) figure. The instantaneous rate $-\dot{\mathcal{L}}/\mathcal{L}$ is shown on the right figure. The dashed line on the middle plot shows the bound on loss function by Theorem \ref{thm_conv_lin_net}. Lastly, the dashed line on the right plot shows the lower bound by Proposition \ref{prop_lb_insta_rate}.}
    \label{fig_imbalance_main}
\end{figure}

    From Fig.\ref{fig_imbalance_main}, we see that given fixed step size, the convergence rate is improved as we increase the level of the imbalance at initialization and the balanced initialization is the slowest among all cases. Notably, our lower bound on instantaneous rate is reasonably good for all cases except for case 2 at early training stage. 
    
    Moreover, the randomly initialized end-to-end function $\sigma_U\sigma_VU_0V_0^T$ has zero margin, as there is no bound provided for the balance case (Middle plot in Figure \ref{fig_imbalance_main}). Therefore, the margin-based convergence analysis~\citep{arora2018optimization} relies on carefully chosen initial end-to-end function and fail on the case of random initialization. On the contrary, random initialization almost surely yields a non-zero imbalance matrix, and our bound accounts for the effect of imbalance in convergence, resulting a much tighter bound on the rate.
    
    {\color{black}Note that the goal of this experiment is to verify the improved convergence rate achieved by gradient flow initialized with a high level of imbalance. To this end, we approximate the continuous dynamics using gradient descent with a fixed small step size. However, this does not imply that one can always accelerate gradient descent by increasing the level of imbalance at initialization. This is because the step size for gradient descent is sometimes chosen to be close to the largest possible for convergence, but it is unknown how the level of imbalance affects such choice. Analyzing the  effect of large step size on convergence is subject of current research.}


    \section{Conclusion}
    In this paper, we study the explicit role of initialization on controlling the convergence and implicit bias of single-hidden-layer linear networks trained under gradient flow. We first provide a lower bound on the instantaneous rate based on the imbalance matrix and the product, from which convergence guarantees are derived based on sufficient imbalance or sufficient margin. We then show that proper initialization enforces the trajectory of network parameters to be exactly (or approximately) constrained in a low-dimensional invariant set, over which minimizing the loss yields the min-norm solution. Combining those results, we obtain a novel non-asymptotic bound regarding the implicit bias of wide linear networks under random initialization towards the min-norm solution. 
    Our analysis, although on a simpler overparametrized model,  connects overparametrization, initialization, and optimization. We think it is promising for future research to translate some of the concepts such as the imbalance, and the constrained learning to multi-layer linear networks, and eventually to neural networks with nonlinear activations.
    
    \acks{The authors thank the support of the NSF-Simons Research Collaborations on the Mathematical and Scientific Foundations of Deep Learning (NSF grant 2031985), the NSF HDR TRIPODS Institute for the Foundations of Graph and Deep Learning (NSF grant 1934979), the NSF AMPS Program (NSF grant 1736448), and the NSF CAREER Program (NSF grant 1752362).}

\bibliography{ref.bib}

\appendix
\setcounter{equation}{0}
\def\theequation{\thesection.\arabic{equation}}

\newpage
\section{Numerical Verification}\label{sec:app_num_ver}
\subsection{Effect of imbalance quantities on convergence}\label{app_imb_quant}
\noindent
\textbf{Experiment settings} We consider the matrix factorization problem  $$\mathcal{L}=\frac{1}{2}\lV Y-\frac{1}{\sqrt{mh}}UV^T\rV_F^2,\qquad U\in\mathbb{R}^{r\times h},V\in\mathbb{R}^{m\times h}\,,$$ under Xavier initialization~\citep{pmlr-v9-glorot10a}, where $n=20,m=10$. The weights are initialized as $[U]_{ij},[V]_{ij}\sim\mathcal{N}(0,1)$.
The scaling factor $\frac{1}{\sqrt{mh}}$ ensures that at initialization, the product $UV^T$ keeps the same scale as we vary the hidden layer width $h$. Lastly, we set $[Y]_{ij}\sim\mathcal{N}(0,0.1)$, which gives us a randomly chosen target $Y$ with small norm.

\noindent
\textbf{Imbalance quantities at initialization}
When $h\geq n+m$, then with probability 1 under random initialization, the imbalance matrix $D$ has $rank(D)=n+m$ and it has $n$ positive eigenvalues and $m$ negative ones. Our experiment considers the case of $h=30$ (small width) and $h=1000$ (large width).
\begin{figure}[!h]
    \centering
    \includegraphics[height=7cm]{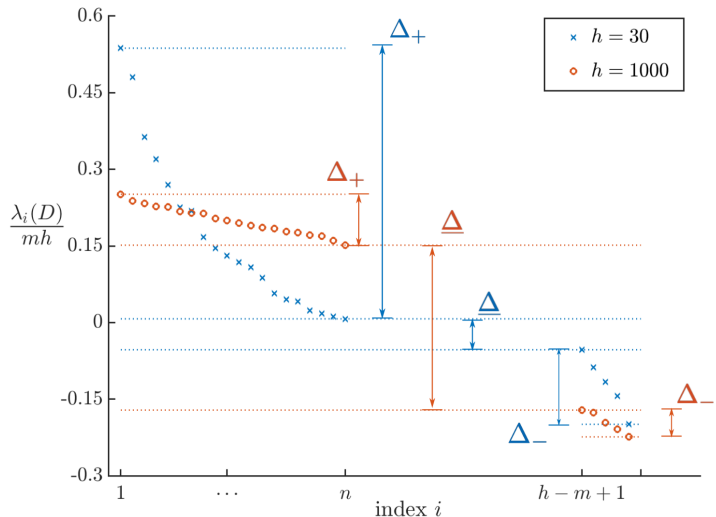}
    \caption{Scaled eigenvalues of the imbalance matrix $D$ and the corresponding scaled imbalance quantities $\frac{1}{mh}\Delta_+,\frac{1}{mh}\Delta_-,\frac{1}{mh}\underline{\Delta}$ under random initialization, the scaling factor is omitted in the plot annotation for simplicity. When the network has small width $h=30$, spectrum spreads $\Delta_+,\Delta_-$ are larger compare to the level of imbalance $\underline{\Delta}$. large width $h=1000$ network shows the opposite.}
    \label{fig_imb_quant_app}
\end{figure}
\begin{remark}
    Notice that the matrix factorization problem with scaling factor $\frac{1}{\sqrt{mh}}$ is equivalent to the regression problem \eqref{eq_loss_u_v} with $X=\frac{1}{\sqrt{mh}}I_n$, the lower bound for the rate is scaled by $\lambda_{\min}(\Sigma_x)=\frac{1}{mh}$. Therefore we analyze the scaled imbalance quantities $\frac{1}{mh}\Delta_+,\frac{1}{mh}\Delta_-,\frac{1}{mh}\underline{\Delta}$.
\end{remark}
We plot in Figure \ref{fig_imb_quant_app} all the non-zero eigenvalues of imbalance $D$ and the imbalance quantities, scaled by $\frac{1}{mh}$. As illustrated by the plot, the imbalance quantities can be understood as the gaps between certain eigenvalues. It is clear that, compare to small width $h=50$, large width $h=1000$ has larger level of imbalance and smaller spectrum spread. 

\noindent
\textbf{Convergence of Gradient Descent} Now under Xavier initialization, we run gradient descent with step size $\eta=0.05$ and plot
\begin{itemize}
    \item The loss function $\mathcal{L}$ in log scale, along with the bound given in Theorem \ref{thm_conv_lin_net};
    \item The instantaneous rate $-\frac{\dot{\mathcal{L}}}{\mathcal{L}}$, along with the bound given in Proposition \ref{prop_lb_insta_rate};
\end{itemize}
for each iteration. We run the experiment under different width $h=30,100,300,1000$.
\begin{figure}[!h]
    \centering
    \includegraphics[height=10cm]{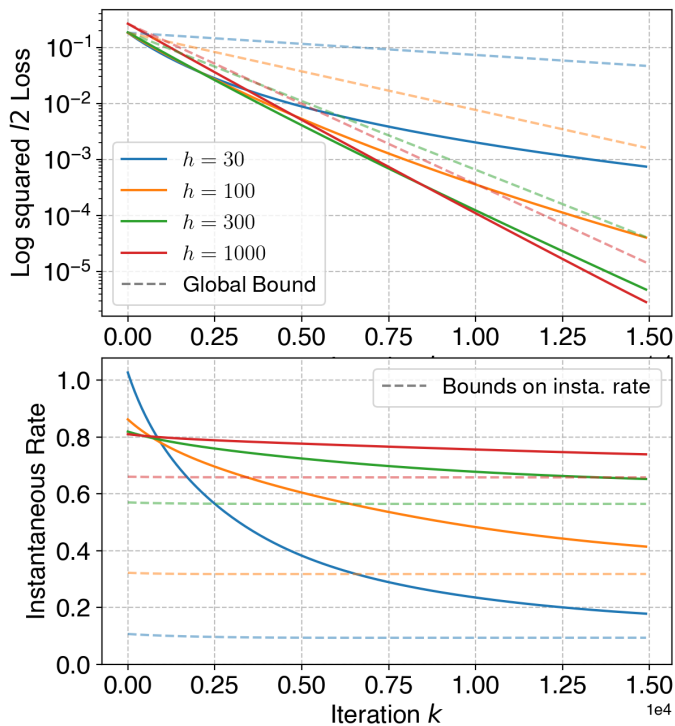}
    \caption{Gradient descent on $\mathcal{L}=\frac{1}{2}\lV Y-\frac{1}{\sqrt{mh}}UV^T\rV_F^2$.}
    \label{fig_imb_quant_conv_app}
\end{figure}

As the width varies, the loss curve behaves differently:

\noindent
\emph{(Small width)}: When $h=30$, spectrum spreads $\Delta_-,\Delta_+$ are larger compared to the level of imbalance $\underline{\Delta}$. As we discussed in Section \ref{sec:conv_grad_flow} after Proposition \ref{prop_lb_insta_rate}, the lower bound on the rate is approximately $2\underline{\Delta}$, which is not a good global bound\footnote{Under random initialization, the margin term in Theorem \ref{thm_conv_lin_net} is zero with high probability. Therefore the global bound generally depends on the imbalance quantities only.} for the convergence rate (see the top plot in Figure \ref{fig_imb_quant_conv_app}). However, interestingly, the instantaneous rate (see the bottom plot in Figure \ref{fig_imb_quant_conv_app}) starts off at large value and decreases as training proceeds. At late stage of the training, our lower bound for the instantaneous rate is reasonably good.

\noindent
\emph{(Large width)}: When $h=1000$, the level of imbalance $\underline{\Delta}$ is larger compared to spectrum spreads $\Delta_-,\Delta_+$. In this case $2\underline{\Delta}$ is a good global bound on the convergence rate (see the top plot in Figure \ref{fig_imb_quant_conv_app}). As for the instantaneous rate, there is no significant variation in the rate and our bound Proposition \ref{prop_lb_insta_rate} is reasonably good during training.

Our analysis provide some insights to these observations: Following the analysis in Appendix \ref{app_tightness}, the dynamics of the error $E=Y-\frac{1}{\sqrt{mh}}UV^T$ can be written as $\dot{E}=\frac{1}{mh}\mathcal{T}_tE$, where $\mathcal{T}_t$ is a time-variant linear operator on $\mathbb{R}^{n\times m}$. Moreover, the eigenvalues of $\mathcal{T}_t$, which characterize the convergence rate of error in different directions, can be explicitly expressed as $\lambda_i(U(t)U(t)^T)+\lambda_j(V(t)V(t)^T),1\leq i\leq n,1\leq j\leq m$. When $UV^T$ has small norm during training, which is the case in our experiment with target $Y$ having small norm, positive eigenvalues of the imbalance serve as a good approximate to $\lambda_i(U(t)U(t)^T),1\leq i\leq n$ and negative eigenvalues serve as a good approximate to $-\lambda_j(V(t)V(t)^T),1\leq i\leq m$. 

When the width is small, there is large spectrum spread for the eigenvalues of the imbalance matrix, which implies the eigenvalues of $\mathcal{T}_t$ have large spread as well. The error $E$ converges faster in some directions but much slower in others, and our lower bound only accounts for the slowest direction in which the error converges. Therefore, the lower bound in Proposition \ref{prop_lb_insta_rate} is not tight at early stage of the training. The bound becomes better as training proceeds because at late stage, the main component of the error lies in the slow directions. On the contrary, when the width is large, small spectrum spread implies that the eigenvalues of $\mathcal{T}_t$ all concentrate at a certain value, and our lower bound accurately characterize the convergence rate of error in every directions.

In summary, for the convergence of linear networks, we observe two regimes, depending on the relative values between the spectrum spread and the level of imbalance, where the loss curve behaves differently. Through our experiment, we show that random initialization could fall into one of the regime depending on the network width. Our analysis hints some relation between the imbalance quantities $\Delta_+,\Delta_-,\underline{\Delta}$ and the behavior of the loss curve, and establishing such connection formally is left to future research.


\subsection{Convergence of single-hidden-layer linear network via imbalanced initialization}\label{app_expmt_lin_conv_imb}
The scale of the linear regression problem we consider in Section \ref{app_expmt_lin_conv_imb} and \ref{app_expmt_lin_implicit} is $D=500$, $n=100$, and $m=1$.

\noindent
\textbf{Generating training data} The synthetic training data is generated as following:

1) For data matrix $X$, first we generate $X_0\in\mathbb{R}^{n\times D}$ with all the entries sampled from $\mathcal{N}(0,1)$, and take its SVD $X_0=W\Sigma^{1/2}\Phi_1$. Then we let $X=W\Phi_1$, hence we have all the singular values of $X$ being 1. Here $r=\mathrm{rank}(X)=n=100$.

2) For $Y$, we first sample $\Theta\sim\mathcal{N}(0,D^{-1}I_D)$, and $\epsilon\sim\mathcal{N}(0,0.01^2I_n)$, then we let $Y=X\Theta+\epsilon$. 

\noindent
\textbf{Initialization and Training} We set the hidden layer width $h=500$. We initialize $U(0),V(0)$ with 
$$
    U(0)=\sigma_U U_0,\ V(0)=\sigma_V V_0,\ [U_0]_{ij},[V_0]_{ij}\overset{i.i.d.}{\sim} \mathcal{N}(0,1)\,,
$$
 and we consider three cases of such initialization: 1) $\sigma_U=0.1,\ \sigma_V=0.1$; 2) $\sigma_U=0.5,\ \sigma_V=0.02$; 3) $\sigma_U=0.05,\ \sigma_V=0.2$. Such setting ensures the initial end-to-end function are identical for all cases but with different imbalance matrices. For these three cases, we run gradient descent on the averaged loss $\tilde{L}=\frac{1}{n}\|Y-XUV^T\|_F^2$ with step size\footnote{To compute the bound from Theorem 1, the step size is scaled by $n/2$ to account for that the gradient descent uses rescaled loss function.} $\eta=5e-4$. 

For comparison, we also consider the balanced initialization that corresponds to the same end-to-end matrix. For a given $\Theta(0)=U(0)V^T(0)$, we choose an arbitrary $Q\in\mathbb{R}^{h\times m}$ with $Q^TQ=I_m$, then a balanced initialization is given by
\ben
    U_\mathrm{balanced}(0)=\Theta(0)\lhp \Theta^T(0)\Phi_1\Phi_1^T\Theta(0)\rhp^{-1/4}Q^T,\quad
    V_\mathrm{balanced}(0)=\lhp \Theta^T(0)\Phi_1\Phi_1^T\Theta(0)\rhp^{1/4}Q\,.
\een
Such initialization ensures the imbalanced is the zero matrix while keeping the end-to-end matrix as $\Theta(0)$. We note here the choice of $Q$ does not affect the error trajectory $E(t)$, hence the loss $\mathcal{L}(t)$.

\begin{figure}[ht]
    \centering
    \includegraphics[height=5.3cm]{plots/lin_reg_conv_imb.png}
    \caption{Convergence of gradient descent on linear networks with different initial imbalance matrices. We plot the loss function $\mathcal{L}$ on the left (Regular scale) and the middle(Log scale) figure. The instantaneous rate $-\dot{\mathcal{L}}/\mathcal{L}$ is shown on the right figure. The dashed line on the middle plot shows the bound on loss function by Theorem \ref{thm_conv_lin_net}. Lastly, the dashed line on the right plot shows the lower bound by Proposition \ref{prop_lb_insta_rate}.}
    \label{fig1_imbalance}
\end{figure}

From Fig.\ref{fig1_imbalance}, we see that given fixed step size, the convergence rate is improved as we increase the level of the imbalance at initialization and the balanced initialization is the slowest among all cases. Notably, our lower bound on instantaneous rate is reasonably good for all cases except for case 2 at early training stage. 
    
    Moreover, the randomly initialized end-to-end function $\sigma_U\sigma_VU_0V_0^T$ has zero margin, as there is no bound provided for the balance case (Middle plot in Figure \ref{fig_imbalance_main}). Therefore, the margin-based convergence analysis~\citep{arora2018optimization} relies on carefully chosen initial end-to-end function and fail on the case of random initialization. On the contrary, random initialization almost surely yields a non-zero imbalance matrix, and our bound accounts for the effect of imbalance in convergence, resulting a much tighter bound on the rate.

\subsection{Implicit regularization on wide single-hidden-layer linear network}\label{app_expmt_lin_implicit}
\textbf{Generating training data} The synthetic training data is generated as following:

1) For data matrix $X$, first we generate $X\in\mathbb{R}^{n\times D}$ with all the entries sampled from $\mathcal{N}(0,D^{-1})$;

2) For $Y$, we first sample $\Theta\sim\mathcal{N}(0,D^{-1}I_D)$, and $\epsilon\sim\mathcal{N}(0,0.01^2I_n)$, then we let $Y=X\Theta+\epsilon$. 

\textbf{Initialization and Training} We initialize $U(0),V(0)$ with $[U(0)]_{ij}\sim\mathcal{N}(0,h^{-1})$, $[V(0)]_{ij}\sim\mathcal{N}(0,h^{-1})$ and run gradient descent on the averaged loss $\tilde{L}=\frac{1}{n}\|Y-XUV^T\|_F^2$ with step size $\eta=5e-3$. The training stops when the loss is below $1e-8$. We run the algorithm for various $h$ from $500$ to $10000$, and we repeat 5 runs for each $h$.
\begin{figure}[ht]
    \centering
    \includegraphics[height=4.5cm]{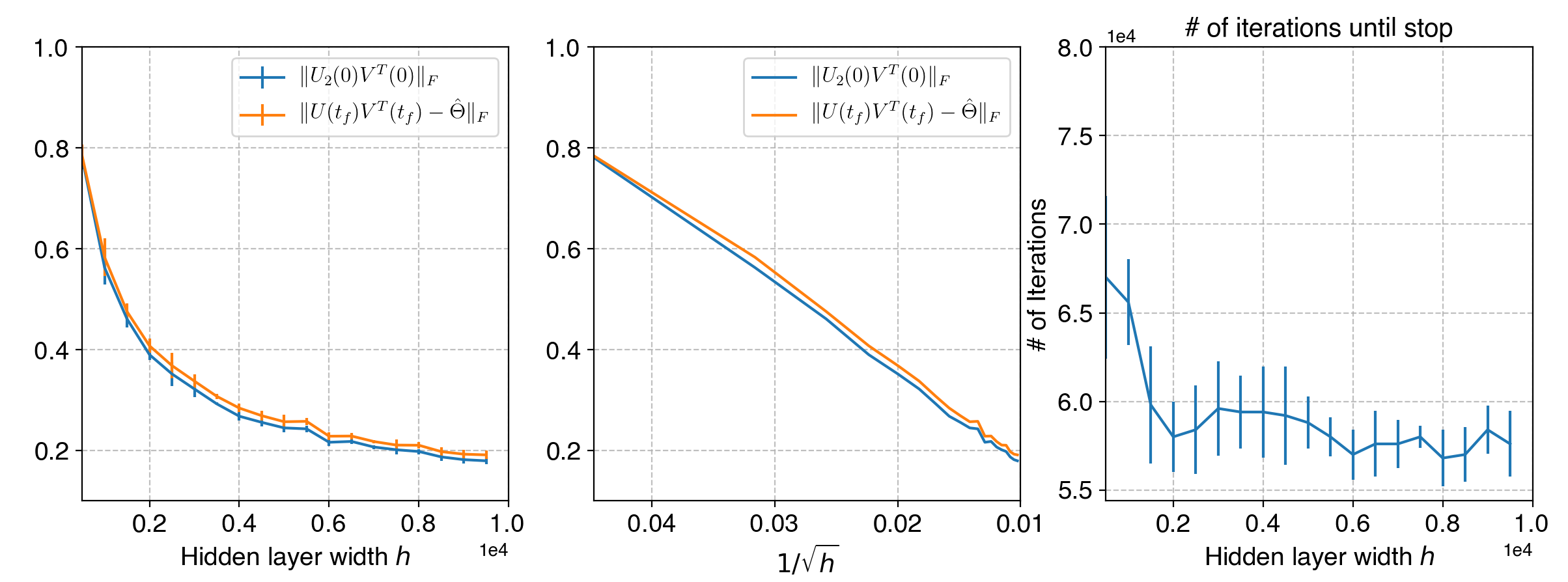}
    \caption{Implicit bias of wide single-hidden-layer linear network under random initialization. The line is plotting the average over 5 runs for each $h$, and the error bar shows the standard deviation. The gradient descent stops at iteration $t_f$.}
    \label{fig2_dist_to_min_norm}
\end{figure}

Fig.\ref{fig2_dist_to_min_norm}  clearly shows that the distance between the trained network and the min-norm solution, $\|U(t_f)V^T(t_f)-\hat{\Theta}\|_F$, decreases as the width $h$ increases and the middle plot verifies the asymptotic rate $\mathcal{O}(h^{-1/2})$.

\section{Proofs of Lemma \ref{lem_lb_u_v} and \ref{lem_lb_a_b}}\label{app_pf_conv_lem}
\begin{proof}[Proof of Lemma \ref{lem_lb_u_v}]
    Under \eqref{eq_gf_rp}, the time derivative of error is given by
    \ben
        \dot{E}=-\Sigma^{1/2}_{x}U_1U_1^T\Sigma^{1/2}_{x}E-\Sigma_xEVV^T\,.
    \een
    Consider the time derivative of $\|E\|_F^2$,
    \be
        \frac{d}{dt}\|E\|_F^2= \frac{d}{dt}\tr(E^TE)=-2\tr\lp E^T\Sigma^{1/2}_{x}U_1U_1^T\Sigma^{1/2}_{x}E+E^T\Sigma_xEVV^T\rp\,.\label{eq_dt_err_fro_norm}
    \ee
    Use the trace inequality~\citep[Lemma 1]{wang1986} to get the lower bound the trace of two matrices respectively as
    \begin{align}
        \tr\lp E^T\Sigma^{1/2}_{x}U_1U_1^T\Sigma^{1/2}_{x}E\rp
        =&\;\tr\lp \Sigma^{1/2}_{x}EE^T\Sigma^{1/2}_{x}U_1U_1^T\rp\nonumber\\
        \geq&\; \lambda_r(U_1U_1^T)\tr\lp \Sigma^{1/2}_{x}EE^T\Sigma^{1/2}_{x}\rp\nonumber\\
        =&\; \lambda_r(U_1U_1^T)\tr\lp \Sigma_{x}EE^T\rp\nonumber\\
        \geq&\; \lambda_r(U_1U_1^T)\lambda_r(\Sigma_x)\tr(EE^T)\nonumber\\
        =&\;\lambda_r(U_1U_1^T)\lambda_r(\Sigma_x)\|E\|_F^2\,,\label{eq_dt_err_fro_norm_lb_u}
    \end{align}
    and
    \begin{align}
        \tr\lp E^T\Sigma_x EVV^T\rp
        \geq&\; \lambda_m(VV^T)\tr\lp E^T\Sigma_x E\rp\nonumber\\
        =&\;\lambda_m(VV^T)\tr\lp \Sigma_x EE^T\rp\nonumber\\
        \geq&\; \lambda_m(VV^T)\lambda_r(\Sigma_x)\tr(EE^T)\nonumber\\
        =&\;\lambda_m(VV^T)\lambda_r(\Sigma_x)\|E\|_F^2\,.\label{eq_dt_err_fro_norm_lb_v}
    \end{align}
    Combine \eqref{eq_dt_err_fro_norm} with \eqref{eq_dt_err_fro_norm_lb_u}\eqref{eq_dt_err_fro_norm_lb_v}, we have
    \be
        \frac{d}{dt}\|E\|_F^2\leq -2\lambda_r(\Sigma_x)\lp \lambda_r(U_1U_1^T)+\lambda_m(VV^T)\rp \|E\|_F^2\label{eq_dt_err_fro_norm_lb}
    \ee
    Notice that $\frac{1}{2}\|E\|_F^2$ is exactly $\tilde{\mathcal{L}}=\mathcal{L}-\mathcal{L}^*$. It follows from \eqref{eq_dt_err_fro_norm_lb} that
    \ben
        -\frac{\dot{\tilde{\mathcal{L}}}}{\tilde{\mathcal{L}}}\geq 2\lambda_r(\Sigma_x)\lp \lambda_r(U_1U_1^T)+\lambda_m(VV^T)\rp\,.
    \een
\end{proof}

\begin{proof}[Proof of Lemma \ref{lem_lb_a_b}]
    
    From the imbalance equation $A^TA-BB^T=D$, we have
    \begin{align*}
        (B^TB)^2= B^T(BB^T)B
        = B^T(A^TA-D)B= B^TA^TAB-B^TDB\,.
    \end{align*}
    Let $z_m\in\mathbb{S}^{m-1}$ be the eigenvector of $(B^TB)^2$ (or $B^TB$) associated with eigenvalue $\lambda_m^2(B^TB)$ (or $\lambda_m(B^TB)$). The one have
    \begin{align}
        \lambda_m^2(B^TB)=z_m^T(B^TB)^2z_m&=\;z_m^TB^TA^TABz_m-z_m^TB^TDBz_m\nonumber\\
        &\geq \; \lambda_m(B^TA^TAB)-z_m^TB^TDBz_m\,,\nonumber\\
        &=\; \sigma_m^2(AB)-z_m^TB^TDBz_m\label{eq_lem_lb_w2_eq1}
    \end{align}
    and the rest of proof is to find a lower bound for $-z^T_mB^TDBz_m$.
    
    First of all, we know that $D$ has at most $m$ negative eigenvalues: If $D$ has more than $m$ negative eigenvalues, then the subspace spanned by the all negative eigenvectors has dimension at least $m+1$, which must have non-trivial intersection with $\ker(B^T)$, then there exists a nonzero vector $z\in\ker(B^T)$ such that $z^TDz<0$, which would imply $z^TA^TAz=z^TDz<0$, a contradiction. 
    
    \underline{When $D$ has less than $m$ negative eigenvalues}, then $\underline{\lambda}=0$ and we simply lower bound $-z^T_mB^TDBz_m$ as
    \begin{align*}
        \lambda_m^2(B^TB)&\geq \; \sigma_m^2(AB)-z_m^TB^TDBz_m\\
        &\geq \; \sigma_m^2(AB)-\bar{\lambda} z_m^TB^TBz_m\\
        &= \;  \sigma_m^2(AB)-\bar{\lambda} \lambda_m(B^TB)\,.
    \end{align*}
    This quadratic inequality w.r.t. $\lambda_m(B^TB)$ has nonnegative solutions
    $$
        \lambda_m(B^TB)\geq\frac{-\bar{\lambda}+\sqrt{\bar{\lambda}^2+4\sigma_m^2(AB)}}{2}\,,
    $$
    which is exactly \eqref{eq_lem_lb_w2} when $\underline{\lambda}=0$.
    
    \underline{When $D$ has exactly $m$ negative eigenvalues}, the easy case is one with $h=m$, i.e. all eigenvalues of $D$ are negative. We simply lower bound $-z^T_mB^TDBz_m$ as
    \begin{align*}
        \lambda_m^2(B^TB)&\geq \; \sigma_m^2(AB)-z_m^TB^TDBz_m\\
        &\geq \; \sigma_m^2(AB)-(-\underline{\lambda} z_m^TB^TBz_m)\\
        &= \;  \sigma_m^2(AB)+\underline{\lambda} \lambda_m(B^TB)\,.
    \end{align*}
    This quadratic inequality w.r.t. $\lambda_m(B^TB)$ has nonnegative solutions
    $$
        \lambda_m(B^TB)\geq\frac{\underline{\lambda}+\sqrt{\underline{\lambda}^2+4\sigma_m^2(AB)}}{2}\,,
    $$
    which is exactly \eqref{eq_lem_lb_w2} when $\bar{\lambda}=0$.

    Now we only left to prove the bound for the case $h>m$. We first consider any orthogonal matrix $Q\in\mathcal{O}(h)$, we have $Q^TA^TAQ-Q^TBB^TQ=Q^TDQ$, $AQQ^TB=AB$, and $\lambda_m(B^TQ^TQB)=\lambda_m(B^TB)$. Then it suffices to study the rotated matrices $\tilde{A}=AQ,\tilde{B}=Q^TB$, with $\tilde{A}^T\tilde{A}-\tilde{B}\tilde{B}^T=Q^TDQ,\tilde{A}\tilde{B}=AB$ and find a lower bound on $\lambda_m(\tilde{B}^T\tilde{B})$. We can pick $Q$ that diagonalize $D$, thus \emph{with out loss of generality, we assume $D$ is diagonal and the eigenvalues are in decreasing order}.
    
    Since $h>m$, we write the diagonal $D$ as a block matrix
    $
    D= \bmt \Lambda_+&0\\
    0&-\Lambda_-\emt\,,$
    where
    \begin{align*}
        &\;\Lambda_+=\mathrm{diag}\{\lambda_1(D),\cdots,\lambda_{h-m}(D)\}\\
        &\; \Lambda_-=\mathrm{diag}\{-\lambda_{h-m+1}(D),\cdots,-\lambda_{h}(D)\}=\mathrm{diag}\{\lambda_{m}(-D),\cdots,\lambda_{1}(-D)\}\,.
    \end{align*} 
    Here, notice that $\Lambda_+$ is positive semi-definite and $\Lambda_-$ positive definite with 
    \be
        \Lambda_+\preceq \bar{\lambda} I_{h-m},\ \Lambda_-\succeq \underline{\lambda} I_m\,.\label{eq_blk_lambda_pd}
    \ee
    Now we write $A,B$ as block matrices as well
    \begin{align*}
        &\;A=\bmt A_+&A_-\emt,\ B=\bmt B_+\\B_-\emt,\\ &\;A_+\in\mathbb{R}^{r\times (h-m)}, A_-\in\mathbb{R}^{r\times m}, B_+\in\mathbb{R}^{(h-m)\times m},B_-\in\mathbb{R}^{m\times m}\,, 
    \end{align*}
    from which we can rewrite equations $A^TA-BB^T=D$ as
    \begin{align*}
        \bmt A_+^T\\ A_-^T
        \emt\bmt A_+&A_-\emt-\bmt B_+\\ B_-
        \emt\bmt B_+^T&B_-^T\emt&=\;\bmt \Lambda_+&0\\
        0&-\Lambda_-\emt\,.
    \end{align*}
    By inspection, the equality for each block gives us
    \begin{align}
        A_+^TA_+&=\;B_+B_+^T+\Lambda_+\,,\label{eq_imb_blk1}\\
        A_-^TA_-&=\;B_-B_-^T-\Lambda_-\,,\label{eq_imb_blk2}\\
        A_+^TA_-&=\;B_+B_-^T\,.
    \end{align}
    With these equalities, we know the following matrix is p.s.d., for any $\hat{\lambda}>\bar{\lambda}\geq 0$,
    \begin{align}
        \bmt
            B_+B_+^T+\hat{\lambda} I_{h-m} &B_+B_-^T\\
            B_-B_+^T& B_-B_-^T-\underline{\lambda}I_m
        \emt\overset{\eqref{eq_blk_lambda_pd}}{\succeq}&\; \bmt
            B_+B_+^T+\Lambda_+ &B_+B_-^T\\
            B_-B_+^T& B_-B_-^T-\Lambda_-
        \emt\nonumber\\
        =&\;\bmt A_+^T\\ A_-^T\emt\bmt A_+&A_-\emt\succeq 0\,.\label{eq_psd}
    \end{align}
    Since $B_+B_+^T+\hat{\lambda}I_{h-m}\succ 0$, positive semi-definiteness \eqref{eq_psd} is equivalent to
    \be
        B_-B_-^T-\underline{\lambda}I_m-B_-B_+^T(B_+B_+^T+\hat{\lambda}I_{h-m})^{-1}B_+B_-^T\succeq 0\,.\label{eq_psd_equiv}
    \ee
    Now we use Woodbury's Identity~\citep[0.7.4]{Horn:2012:MA:2422911}, which says for matrices $M,N,P$ with appropriate dimensions, we have
    \ben
        (M+P^TNP)^{-1}=M^{-1}-M^{-1}P^T(PM^{-1}P^T+N^{-1})^{-1}PM^{-1}\,,
    \een
    if all inverses exist. Let $M=I_{m},N=\hat{\lambda}^{-1}I_{h-m}, P=B_+$, we have
    \ben
        (I_m+\hat{\lambda}^{-1}B_+^TB_+)^{-1}=I_m-B_+^T(\hat{\lambda}I_{h-m}+B_+B_+^T)^{-1}B_+\,,
    \een
    which leads to
    \be
        B_-(I_m+\hat{\lambda}^{-1}B_+^TB_+)^{-1}B_-^T=B_-B_-^T-B_-B_+^T(\hat{\lambda}I_{h-m}+B_+B_+^T)^{-1}B_+B_-^T\,.\label{eq_woodbury_pd}
    \ee
    Using \eqref{eq_woodbury_pd}, we can rewrite \eqref{eq_psd_equiv} as
    \be 
        \underline{\lambda}I_m-B_-(I_m+\hat{\lambda}^{-1}B_+^TB_+)^{-1}B_-^T\preceq 0\label{eq_psd_equiv2}\,.
    \ee
    Consider the following matrix congruence
    \begin{align}
        &\; \bmt
            \underline{\lambda}I_{m} & B_-\\
            B_-^T& I_m+\hat{\lambda}^{-1}B_+^TB_+
        \emt\nonumber\\
        =&\;S_1\bmt \underline{\lambda}I_{m}-B_-(I_m+\hat{\lambda}^{-1}B_+^TB_+)^{-1}B_-^T & 0\\
        0 & I_m+\hat{\lambda}^{-1}B_+^TB_+\emt S_1^T\label{eq_mat_cong_blk1}\\
        =&\; S_2\bmt 
            \underline{\lambda}I_{m} & 0\\
            0 & I_m+\hat{\lambda}^{-1}B_+^TB_+-\underline{\lambda}^{-1}B_-^TB_-
        \emt S_2^T\label{eq_mat_cong_blk2}
    \end{align}
    where
    \ben
        S_1= \bmt 
            I_{m} & B_-(I_m+\hat{\lambda}^{-1}B_+^TB_+)^{-1}\\
            0 & I_m 
        \emt,\qquad S_2=\bmt 
            I_m & 0\\
            \underline{\lambda}^{-1}B_-^T & I_{m} 
        \emt\,,
    \een
    and $S_1,S_2$ are non-singular. By Sylvester's Intertia Theorem~\citep[Theorem 4.5.8]{Horn:2012:MA:2422911}, the block diagonal matrix shown in \eqref{eq_mat_cong_blk1} has exactly the same number of positive eigenvalues as the one shown in \eqref{eq_mat_cong_blk2}, and the number of positive eigenvalues is $m$, according to \eqref{eq_psd_equiv2}. Then for the block diagonal matrix in \eqref{eq_mat_cong_blk2}, we must have
    \ben
        I_m+\hat{\lambda}^{-1}B_+^TB_+-\underline{\lambda}^{-1}B_-^TB_-\preceq 0\,,
    \een
    hence
    \begin{align}
        0&\preceq\; -I_m-\hat{\lambda}^{-1}B_+^TB_++\underline{\lambda}^{-1}B_-^TB_-\nonumber\\
        0&\preceq\; -\hat{\lambda}\underline{\lambda}I_m-\underline{\lambda}B_+^TB_++\hat{\lambda}B_-^TB_-\nonumber\\
        \hat{\lambda}B_+^TB_+-\underline{\lambda}B_-^TB_-&\preceq\; -\hat{\lambda}\underline{\lambda}I_m-\underline{\lambda}B_+^TB_++\hat{\lambda}B_-^TB_-\nonumber\\
        &\;\qquad \qquad\qquad+\hat{\lambda}B_+^TB_+-\underline{\lambda}B_-^TB_-\nonumber\\
        \hat{\lambda}B_+^TB_+-\underline{\lambda}B_-^TB_-&\preceq\; -\hat{\lambda}\underline{\lambda}I_m+(\hat{\lambda}-\underline{\lambda})(B_+^TB_++B_-^TB_-)\nonumber\\
        \hat{\lambda}B_+^TB_+-\underline{\lambda}B_-^TB_-&\preceq\; -\hat{\lambda}\underline{\lambda}I_m+(\hat{\lambda}-\underline{\lambda})B^TB\,,\label{eq_b_cross}
    \end{align}
    where the last equivalence uses the fact $B^TB=B_+^TB_++B_-^TB_-$. This suggests that 
    \begin{align}
        B^TDB= B_+^T\Lambda_+B_+-B_-^T\Lambda_-B_-
        &\preceq \;\hat{\lambda}B_+^TB_+-\underline{\lambda}B_-^TB_-\nonumber\\
        &\overset{\eqref{eq_b_cross}}{\preceq}\; -\hat{\lambda}\underline{\lambda}I_m+(\hat{\lambda}-\underline{\lambda})B^TB\label{eq_BDB_ub}
    \end{align}
    Lastly, from \eqref{eq_lem_lb_w2_eq1} we have 
    \begin{align*}  
        \lambda_m^2(B^TB)=z_m^T(B^TB)^2z_m&\geq\; \sigma_m^2(AB)-z_m^TB^TDBz_m\\
        &\overset{\eqref{eq_BDB_ub}}{\geq}\; \sigma_m^2(AB)-\hat{\lambda}\underline{\lambda}+(\hat{\lambda}-\underline{\lambda})z^T_mB^TBz_m\\
        &=\; \sigma_m^2(AB)-\hat{\lambda}\underline{\lambda}+(\hat{\lambda}-\underline{\lambda})\lambda_m(B^TB)\,.
    \end{align*}
    This quadratic inequality w.r.t. $\lambda_m(B^TB)$ has nonnegative solutions
    \ben
        \lambda_m(B^TB)\geq \frac{\underline{\lambda}-\hat{\lambda}+\sqrt{(\underline{\lambda}-\hat{\lambda})^2+4\hat{\lambda}\underline{\lambda}+4\sigma_m^2(AB)}}{2}
        =\frac{-\hat{\lambda}+\underline{\lambda}+\sqrt{(\hat{\lambda}+\underline{\lambda})^2+4\sigma_m^2(AB)}}{2}\;\,.
    \een
    Since we can choose any $\hat{\lambda}>\bar{\lambda}\geq 0$, we have 
    \ben
        \lambda_m(B^TB)\geq \lim_{\hat{\lambda}\ra \bar{\lambda}}\frac{-\hat{\lambda}+\underline{\lambda}+\sqrt{(\hat{\lambda}+\underline{\lambda})^2+4\sigma_m^2(AB)}}{2}
        =\frac{-\bar{\lambda}+\underline{\lambda}+\sqrt{(\bar{\lambda}+\underline{\lambda})^2+4\sigma_m^2(AB)}}{2}\;\,.
    \een
    This is exactly \eqref{eq_lem_lb_w2}.
    
    (Note that when $\bar{\lambda}>0$, one can pick $\hat{\lambda}=\bar{\lambda}$ and obtain the desired bound directly. Taking the limit $\hat{\lambda}\ra \bar{\lambda}$ is necessary only when $\bar{\lambda}=0$).
\end{proof}

\section{Detailed analysis for the matrix factorization problem}\label{app_tightness}
Consider the gradient flow on $\tilde{\mathcal{L}}=\frac{1}{2}\|Y-UV^T\|^2_F$,\footnote{When $\Sigma_x=I_r$, $\mathcal{L}-\mathcal{L}^*=\frac{1}{2}\|W^TY-\Sigma_x^{1/2}U_1V^T\|_F^2=\frac{1}{2}\|\tilde{Y}-U_1V^T\|_F^2$ is exactly of this form.} where $Y\in\mathbb{R}^{r\times m},U\in\mathbb{R}^{r\times h},V\in\mathbb{R}^{h\times n}$. Still we define $E:=Y-UV^T$.
    
    We start with the exact expression for the instantaneous rate
    \begin{align*}
        -\frac{\dot{\tilde{\mathcal{L}}}}{\tilde{\mathcal{L}}}=-2\frac{\tr (E^T\dot{E})}{\|E\|^2_F}
        = 2\frac{\tr (E^T(U\dot{V}+\dot{U}V^T))}{\|E\|^2_F}
        =2\frac{\tr (E^T(UU^TE+EVV^T))}{\|E\|^2_F}\,.
    \end{align*}
    If we define the Hermitian linear operator $\mathcal{T}_{U,V}$ on $\mathbb{R}^{r\times m}$ as $\mathcal{T}_{U,V}E=UU^TE+EVV^T$. Then the instantaneous rate is actually a Rayleigh quotient
    $$
        -\frac{\dot{\tilde{\mathcal{L}}}}{\tilde{\mathcal{L}}}=2\frac{\lan E,\mathcal{T}_{U,V}E\ran_F}{\lan E,E\ran_F}\,,
    $$
    where $\lan \cdot,\cdot \ran_F$ is the Frobenius inner product on $\mathbb{R}^{r\times m}$. Notice that both $E$ and $\mathcal{T}_{U,V}$ depend on $U,V$ here. 
    
    Now our goal is find the best lower bound on the instantaneous rate, provided that the imbalance $D=U^TU-V^TV$ and product $W=UV^T$ is known to us (Recall that we can express the instantaneous rate exactly by the imbalance and product in the scalar case \eqref{eq_scalar_insta_r_dp}). That is, the following problem
    \begin{problem}\label{prob_best_bound}
        Suppose $h\geq \min\{r,m\}$. Given $Y\in\mathbb{R}^{r\times m}$, $D\in\mathbb{R}^{h\times h}$ and $W\in \mathbb{R}^{r\times m}$, find
        $$
            c^*(Y,D,W)=\min\left\{2\frac{\lan E,\mathcal{T}_{U,V}E\ran_F}{\lan E,E\ran_F}:U^TU-V^TV=D,UV^T=W\right\}\,.
        $$
    \end{problem}
    $c^*$ is the best bound we can obtain by knowing the imbalance $D$ and the product $W$, but it also depends on $Y$ because we have defined $E=Y-UV^T$. Problem \ref{prob_best_bound} is generally hard to solve except for very special cases, thus we consider a lower bound for $c^*$:
    \begin{align*}
        c^*(Y,D,W)&=\;\min\left\{2\frac{\lan E,\mathcal{T}_{U,V}E\ran_F}{\lan E,E\ran_F}:U^TU-V^TV=D,UV^T=W\right\}\\
        &\geq\;\min\left\{2\min_Y\frac{\lan E,\mathcal{T}_{U,V}E\ran_F}{\lan E,E\ran_F}:U^TU-V^TV=D,UV^T=W\right\}\\
        &=\; \min\left\{2\lambda_{\min}(\mathcal{T}_{U,V}):U^TU-V^TV=D,UV^T=W\right\}:=c(D,W)\,,
    \end{align*}
    Here the second equality is obtained by choosing $Y=E_{\min}+UV^T$ where $E_{\min}$ is the least eigenmatrix of $\mathcal{T}_{U,V}$. Moreover, one can show that~\citep{schacke2004kronecker} $$\lambda_{\min}(\mathcal{T}_{U,V})=\lambda_{r}(UU^T)+\lambda_{m}(VV^T)\,.$$
    This left us to consider the following problem
    \begin{problem}\label{prob_weaker_bound}
        Suppose $h\geq \min\{r,m\}$. Given $Y\in\mathbb{R}^{r\times m}$, $D\in\mathbb{R}^{h\times h}$ and $W\in \mathbb{R}^{r\times m}$, find
        $$
            c(D,W)=\min\left\{2(\lambda_{r}(UU^T)+\lambda_{m}(VV^T)):U^TU-V^TV=D,UV^T=W\right\}\,.
        $$
    \end{problem}
    Now following the results in Section \ref{ssec:conv_pf}, one obtain, by Lemma \ref{lem_lb_a_b},
    $$
        c(D,W)\geq -\Delta_++\sqrt{(\Delta_++\underline{\Delta})^2+4\sigma^2_{m}(W)}-\Delta_-+\sqrt{(\Delta_-+\underline{\Delta})^2+4\sigma^2_{r}(W)}\,.
    $$
    It turns out that this lower bound for $C(D,W)$ is tight in most cases. Formally speaking, we have
    \begin{proposition}\label{prop_app_tightness}
        Suppose $h\geq \min\{r,m\}$. When $r\neq m$, we have 
        $$
            c(D,W)= -\Delta_++\sqrt{(\Delta_++\underline{\Delta})^2+4\sigma^2_{m}(W)}-\Delta_-+\sqrt{(\Delta_-+\underline{\Delta})^2+4\sigma^2_{r}(W)}\,.
        $$
    \end{proposition}
    In summary, we have shown
    \begin{align*}
        -\frac{\dot{\tilde{\mathcal{L}}}}{\tilde{\mathcal{L}}}\geq c^*(Y,D,W)&\geq\; c(D,W)\\
        &\geq\;-\Delta_++\sqrt{(\Delta_++\underline{\Delta})^2+4\sigma^2_{m}(W)}-\Delta_-+\sqrt{(\Delta_-+\underline{\Delta})^2+4\sigma^2_{r}(W)}\,.
    \end{align*}
    Therefore, the bound in Proposition \ref{prop_lb_insta_rate}, in some sense, is not the best lower bound we can obtain. Potential improvement for the current results would be studying $c^*(Y,D,W)$ directly.
    
    We end this section with the proof for Proposition \ref{prop_app_tightness}.
    \begin{proof}[Proof of Proposition \ref{prop_app_tightness}]Assume $r>m$, the proof for the opposite case is identical.
    
    The statement in Proposition \ref{prop_app_tightness} is equivalent to that given $W\in\mathbb{R}^{r\times m}$ and a symmetric $D\in\mathbb{R}^{h\times h}$ with $\rank(D)\leq r+m$, there exist $U\in\mathbb{R}^{r\times h},V\in\mathbb{R}^{m\times h}$ such that 
    $$
        U^TU-V^TV=D,\ UV^T=W\,,
    $$
    and
    \begin{align*}
        &\;2(\lambda_{r}(UU^T)+\lambda_{m}(VV^T))=\\
        &\;\qquad -\Delta_++\sqrt{(\Delta_++\underline{\Delta})^2+4\sigma^2_{m}(W)}-\Delta_-+\sqrt{(\Delta_-+\underline{\Delta})^2+4\sigma^2_{r}(W)}\,.
    \end{align*}
    Now consider the SVD of $W$, $W=P\tilde{W}Q^T$ , where $P\in\mathcal{O}(r),Q\in\mathcal{O}(m)$ and $\tilde{W}\in\mathbb{R}^{r\times m}$ has singular values of $W$ on its main diagonal. Additionally, we have the eigendecomposition of $D$, $D=R\tilde{D}R^T$, where $R\in\mathcal{O}(h)$ and $\tilde{D}$ has eigenvalues of $D$ on its diagonal. Notice that there are at most $r+m$ non-zero eigenvalues of $D$.
    
    With the decomposition of $W$ and $D$. We only need to show that there exist $\tilde{U}\in\mathbb{R}^{r\times h},\tilde{V}\in\mathbb{R}^{m\times h}$ such that 
    $$
        \tilde{U}^T\tilde{U}-\tilde{V}^T\tilde{V}=\tilde{D},\ \tilde{U}\tilde{V}^T=\tilde{W}\,,
    $$
    and
    \begin{align*}
        &\;2(\lambda_{r}(\tilde{U}\tilde{U}^T)+\lambda_{m}(\tilde{V}\tilde{V}^T))=\\
        &\;\qquad -\Delta_++\sqrt{(\Delta_++\underline{\Delta})^2+4\sigma^2_{m}(W)}-\Delta_-+\sqrt{(\Delta_-+\underline{\Delta})^2+4\sigma^2_{r}(W)}\,.
    \end{align*}
    Then the statement on the existence of $U,V$ is true by setting $U=P\tilde{U}R^T$ and $V=Q\tilde{V}R^T$. Therefore, without loss of generality, we assume $W$ is main diagonal and $D$ diagonal. Furthermore, all main diagonal entries of $W$ are non-negative, denoted by $\sigma_i(W),i=1,\cdots,m$, and we reorder the diagonal entries of $D$ such that
    $$
    D= \bmt \Lambda_1&  &  & & & & \\
    &\ddots & && & & \\
     &  & \Lambda_m & & & &\\
     &  & &\lambda_{m+1}(D) & & &\\
     &  & &  &\ddots & &\\
     &  & & &  &\lambda_r(D) & \\
     &  & &  & & & 0\emt\,,$$
    where
    \begin{align*}
        &\;\Lambda_i=\bmt \lambda_i(D)& 0\\
        0& \lambda_{h-m+i}(D)\emt=\bmt \lambda_i(D)& 0\\
        0& -\lambda_{m-i+1}(-D)\emt\,.
    \end{align*}
    One can verify that this is indeed a valid reordering of the eigenvalues of $D$. Notice that under this reordering, we are pairing some eigenvalues of $D$, as seen in the definition of the $2\times 2$ blocks $\Lambda_i,i=1,\cdots,m$. Moreover, for any of these pairs $\Lambda_i$, we have one non-negative eigenvalue $\lambda_i(D)$ and one non-positive eigenvalue $\lambda_{h-m+i}(D)$.
    
    We claim that given such $W$ and $D$, one of the solution $U,V$ to $$
        U^TU-V^TV=D,\ UV^T=W\,,
    $$
    is of the form
    $$
        U=\bmt U_{\mathrm{nonzero}}& 0_{r\times (h-r-m)}\emt, U_{\mathrm{nonzero}}=\bmt 
        u_1^T & & & & &  \\
         & \ddots& & & &  \\
         & &u_m^T & & &  \\
         & & &\sqrt{\lambda_{m+1}(D)} & &  \\
         & & & &\ddots &  \\
         & & & & & \sqrt{\lambda_{r}(D)}
        \emt\,,
    $$
    $$
        V=\bmt V_{\mathrm{nonzero}}& 0_{m\times (h-2m)}\emt, V_{\mathrm{nonzero}}=\bmt 
        v_1^T & &\\
         & \ddots&  \\
         & &v_m^T 
        \emt\,,
    $$
    where $u_i,v_i,i=1,\cdots,m$ are $2\times 1$ vectors, and they necessarily satisfies for $i=1\cdots,m$,
    \be
        u_iu_i^T-v_iv_i^T=\Lambda_i,\ u_i^Tv_i=\sigma_i(W)\,.\label{eq_pf_tightness_block_eq}
    \ee
    Let $u_i=\bmt u_{i1}\\ u_{i2}\emt,v_i=\bmt v_{i1}\\ v_{i2}\emt$, \eqref{eq_pf_tightness_block_eq} yields $4$ quadratic equations in terms of $4$ unknowns $u_{i1},u_{i2},v_{i1},v_{i2}$\,. \eqref{eq_pf_tightness_block_eq} has at least one real solution whose exact expression is too complicated to write here, but for that real solution, we have
    $$
        u_i^Tu_i=\frac{\lambda_i(D)-\lambda_{m-i+1}(-D)+\sqrt{(\lambda_i(D)+\lambda_{m-i+1}(-D))^2+4\sigma^2_i(W)}}{2}\,,
    $$
    and 
    $$
        v_i^Tv_i=\frac{-\lambda_i(D)+\lambda_{m-i+1}(-D)+\sqrt{(\lambda_i(D)+\lambda_{m-i+1}(-D))^2+4\sigma^2_i(W)}}{2}\,.
    $$
    Notice that under this construction $\lambda_i(UU^T)=u_i^Tu_i,\lambda_i(VV^T)=v_i^Tv_i, i=1,\cdots,m$ and $\lambda_j(UU^T)=\lambda_j(D), j=m+1,\cdots,r$. Therefore,
    \begin{align*}
        &\;\lambda_r(UU^T)+\lambda_m(VV^T)=\\
        &\;\qquad \lambda_r(D)+\frac{-\lambda_1(D)+\lambda_{m}(-D)+\sqrt{(\lambda_1(D)+\lambda_{m}(-D))^2+4\sigma^2_m(W)}}{2}\,,
    \end{align*}
    which is exactly the lower bound we get using Lemma \ref{lem_lb_a_b}. Then the statement 
    \begin{align*}
        &\;2(\lambda_{r}(UU^T)+\lambda_{m}(VV^T))=\\
        &\;\qquad -\Delta_++\sqrt{(\Delta_++\underline{\Delta})^2+4\sigma^2_{m}(W)}-\Delta_-+\sqrt{(\Delta_-+\underline{\Delta})^2+4\sigma^2_{r}(W)}\,.
    \end{align*}
    follows from the definition of $\Delta_+,\Delta_-,\underline{\Delta}$.
    \end{proof}
    \section{Comparison with the NTK Initialization for wide single-hidden-layer linear networks}\label{app_comp_ntk}
    \setcounter{equation}{0}
    
    In Section~\ref{secc:wide_linear_net}, we analyzed implicit bias of wide single-hidden-layer linear networks under properly scaled random initialization. Our initialization for network weights $U,V$ is different from the typical setting in previous works~\citep{jacot2018neural,du2019width,arora2019exact}. In this section, we show that under our setting, the gradient flow is related to the NTK flow by 1) reparametrization and rescaling in time ; 2) proper scaling of the network output. The use of output scaling is also used in~\citet{arora2019exact}. 
    
    In this paper we work with a single-hidden-layer linear network defined as $f:\mathbb{R}^n\ra \mathbb{R}^m, f(x;V,U)=VU^Tx$, which is parametrized by $U,V$. Then we analyze the gradient flow on the loss function $\mathcal{L}(V,U)=\frac{1}{2}
    \lV Y-XUV^T\rV^2_F$, given the data and output matrix $X,Y$. Lastly, in Section 4.2, we initialize $U(0),V(0)$ such that all the entries are randomly drawn from $\mathcal{N}\lp 0,h^{-2\alpha}\rp$ ($1/4<\alpha\leq 1/2$), where $h$ is the hidden layer width.
    
    Now we define $\tilde{U}:=h^\alpha U,\tilde{V}:=h^\alpha V$, then the loss function can be written as
    \begin{align*}
        \mathcal{L}(V,U)=\tilde{\mathcal{L}}(\tilde{V},\tilde{U})=\frac{1}{2}
        \lV Y-\frac{1}{h^{2\alpha}}X\tilde{U}\tilde{V}^T\rV^2_F&=\;\frac{1}{2}
        \lV Y-\frac{\sqrt{m}}{h^{2\alpha-\frac{1}{2}}}\frac{1}{\sqrt{mh}}X\tilde{U}\tilde{V}^T\rV^2_F\\
        &=\;\frac{1}{2}\sum_{i=1}^n\lV y^{(i)}-\frac{\sqrt{m}}{h^{2\alpha-\frac{1}{2}}}\frac{1}{\sqrt{mh}}\tilde{V}\tilde{U}^Tx^{(i)}\rV^2_2\\
        &:=\; \sum_{i=1}^n\lV y^{(i)}-\frac{\sqrt{m}}{h^{2\alpha-\frac{1}{2}}}\tilde{f}(x;\tilde{V},\tilde{U})\rV^2_2
    \end{align*}
    Notice that $\tilde{f}(x;\tilde{V},\tilde{U})=\frac{1}{\sqrt{mh}}\tilde{V}\tilde{U}^Tx$ is the typical network discussed in previous works~\citep{jacot2018neural,du2019width,arora2019exact}. When all the entries of $U(0),V(0)$ are initialized randomly as $\mathcal{N}\lp 0,h^{-2\alpha}\rp$, the entries of $\tilde{U}(0),\tilde{V}(0)$ are random samples from $\mathcal{N}\lp 0,1\rp$, which is the typical choice of initialization for NTK analysis.
    
    However, the difference is that $\tilde{f}(x;\tilde{V},\tilde{U})$ is scaled by $\frac{\sqrt{m}}{h^{2\alpha-\frac{1}{2}}}$. In previous work showing non-asymptotic bound between wide neural networks and its infinite width limit~\citep[Theorem 3.2]{arora2019exact}, the wide neural network is scaled by a small constant $\kappa$ such that the prediction by the trained network is within $\epsilon$-distance to the one by the kernel predictor of its NTK. Moreover,~\citet{arora2019exact} suggests $\frac{1}{\kappa}$ should scale as $poly(\frac{1}{\epsilon})$, i.e., to make sure the trained network is arbitrarily close to the kernel predictor, $\kappa$ should be vanishingly small. In our setting, the random initialization implicitly enforces such a vanishing scaling $\frac{\sqrt{m}}{h^{2\alpha-\frac{1}{2}}}$, as the width of network increases.
    
    Lastly, we show that the gradient flow on $\mathcal{L}(V,U)$ only differs from the flow on $\tilde{\mathcal{L}}(\tilde{V},\tilde{U})$ by the time scale. 
    
    Suppose $U,V$ follows the gradient flow on $\mathcal{L}(V,U)$,  we have
    \begin{align}
        -\frac{1}{h^\alpha}\frac{\partial}{\partial U}\mathcal{L}(V,U)&=\;-\frac{1}{h^\alpha}X^T(Y-XUV^T)V\nonumber\\
        &=\;-\frac{1}{h^{2\alpha}}X^T\lp Y-\frac{1}{h^{2\alpha}}X\tilde{U}\tilde{V}^T\rp\tilde{V}=-\frac{\partial}{\partial \tilde{U}}\tilde{\mathcal{L}}(\tilde{V},\tilde{U})\,,\label{eq1_ntk_scale_grad}
    \end{align}
    and
    \begin{align}
        -\frac{1}{h^\alpha}\frac{\partial}{\partial V}\mathcal{L}(V,U)&=\;-\frac{1}{h^\alpha}(Y-XUV^T)^TXU\nonumber\\
        &=\;-\frac{1}{h^{2\alpha}}\lp Y-\frac{1}{h^{2\alpha}}X\tilde{U}\tilde{V}^T\rp^TX\tilde{U}=-\frac{\partial}{\partial \tilde{V}}\tilde{\mathcal{L}}(\tilde{V},\tilde{U})\,.\label{eq2_ntk_scale_grad}
    \end{align}
    From \eqref{eq1_ntk_scale_grad}, we have
    \begin{align}
        \dot{U}=-\frac{\partial}{\partial U}\mathcal{L}(V,U)
        \Leftrightarrow&\;\frac{1}{h^\alpha}\dot{\tilde{U}}=-\frac{\partial}{\partial U}\mathcal{L}(V,U)\nonumber\\
        \Leftrightarrow&\;\frac{1}{h^\alpha}\dot{\tilde{U}}=-h^\alpha\frac{\partial}{\partial \tilde{U}}\tilde{\mathcal{L}}(\tilde{V},\tilde{U})\nonumber\\
        \Leftrightarrow&\;\dot{\tilde{U}}=-h^{2\alpha}\frac{\partial}{\partial \tilde{U}}\tilde{\mathcal{L}}(\tilde{V},\tilde{U})\label{eq1_fast_than_ntk}\,,
    \end{align}
    Similarly from \eqref{eq2_ntk_scale_grad} we have
    \be
        \dot{V}=-\frac{\partial}{\partial V}\mathcal{L}(V,U)
        \Leftrightarrow\dot{\tilde{V}}=-h^{2\alpha}\frac{\partial}{\partial \tilde{V}}\tilde{\mathcal{L}}(\tilde{V},\tilde{U})\label{eq2_fast_than_ntk}\,.
    \ee
    From \eqref{eq1_fast_than_ntk} and \eqref{eq2_fast_than_ntk} we know that the gradient flow on $\mathcal{L}(V,U)$ w.r.t. time $t$ essentially runs the gradient flow on $\tilde{\mathcal{L}}(\tilde{V},\tilde{U})$ with an scaled-up rate by $h^{2\alpha}$.

\section{Proofs of Proposition \ref{prop_conv_stationary}, Lemma \ref{lem_si_no} and Theorem \ref{thm_asymp_conv_min_norm}}\label{app_pf_thm2}
\setcounter{equation}{0}

We start with the proof of Proposition \ref{prop_conv_stationary}.

\begin{proof}[Proof of Proposition \ref{prop_conv_stationary}]
    Since $c(0)>0$, for the gradient system \eqref{eq_gf_rp}, the states (parameters) $(U_1,V)$ converge either to an equilibrium point which minimizes the potential $\frac{1}{2}\|E\|_F^2=\mathcal{L}-\mathcal{L}^*$ or have its $l_2$-norm grow to infinity~\citep{hirsch1974differential}.
    
    Consider the following dynamics
    \be
        \frac{d}{dt}\bmt V\\ U_1\emt = \underbrace{\bmt 0&  E^T\Sigma^{1/2}_x\\
        \Sigma^{1/2}_xE & 0\emt}_{:=A_Z}\underbrace{\bmt V\\ U_1\emt}_{:=Z}\,,\label{eq_full_dym_concate}
    \ee
    which can be viewed as a time-variant linear system. Notice that by \citet[Theorem 7.3.3]{Horn:2012:MA:2422911}, we have $\|A_Z\|_2=\|\Sigma_x^{1/2}E\|_2$.
    
    From \eqref{eq_full_dym_concate}, we have
    \begin{align*}
        \frac{d}{dt}\|Z\|_F^2&=\;2\tr\lp Z^TA_ZZ\rp\\
        &=\;2\tr\lp ZZ^TA_Z\rp\\
        &\leq 2\|A_Z\|_2\tr\lp ZZ^T\rp\\
        &=\;2\|\Sigma^{1/2}_xE\|_2\|Z\|_F^2\\
        &\leq \; 2\lambda_1^{1/2}(\Sigma_x)\|E\|_2\|Z\|_F^2
        \\
        &\leq\; 2\lambda_1^{1/2}(\Sigma_x)\|E\|_F\|Z\|_F^2\,.
    \end{align*}
    By Gr\"onwall's inequality~\citep{gronwall1919}, we have
    \ben
        \|Z(t)\|_F^2\leq \exp\lp\int_0^t2\lambda_1^{1/2}(\Sigma_x)\|E(\tau)\|_F d\tau\rp\|Z(0)\|_F^2\,.
    \een
    Finally, by Theorem \ref{thm_conv_lin_net}, we have $\|E(t)\|_F\leq \exp\lp -\lambda_n(\Sigma_x)c(0)t/2\rp\|E(0)\|_F,\ \forall t>0$, since $\|E\|_F=\sqrt{2(\mathcal{L}-\mathcal{L}^*)}$, which leads to
    \begin{align*}
         &\;\exp\lp\int_0^t2\lambda_1^{1/2}(\Sigma_x)\|E(\tau)\|_F d\tau\rp\\
         \leq&\; \exp\lp 2\lambda_1^{1/2}(\Sigma_x)\|E(0)\|_F \lp\int_0^t\exp\lp -\lambda_n(\Sigma_x)c(0)\tau/2\rp d\tau\rp\rp\nonumber\\
         \leq&\;
         \exp\lp 2\lambda_1^{1/2}(\Sigma_x)\|E(0)\|_F \lp\int_0^\infty\exp\lp -\lambda_n(\Sigma_x)c(0)\tau/2\rp d\tau\rp\rp\nonumber\\
         =&\;\exp\lp \frac{4\lambda_1^{1/2}(\Sigma_x)}{c(0)\lambda_n(\Sigma_x)}\|E(0)\|_F \rp\,.\label{eq_bd_z_t}
    \end{align*}
    Therefore we have
    \ben
        \|Z(t)\|_F^2\leq \exp\lp \frac{4\lambda_1^{1/2}(\Sigma_x)}{c(0)\lambda_n(\Sigma_x)}\|E(0)\|_F \rp\|Z(0)\|_F^2\,,
    \een
    which implies that the trajectory $V(t),U_1(t), t>0$ is bounded, i.e. its $l_2$-norm can not grow to infinity, then it has to converge to some equilibrium point $(V(\infty),U_1(\infty))$ such that its potential is zero, i.e., $E(V(\infty),U_1(\infty))=0$.
\end{proof}
Now we turn to prove Lemma \ref{lem_si_no} and Theorem \ref{thm_asymp_conv_min_norm}. We need a basic result in random matrix theory
\begin{lemma_sec}\label{thm_sv_rnd_mat}
    Given $m,n\in\mathbb{N}$ with $m\leq n$. Let $A$ be an $n\times m$ random matrix with i.i.d. standard normal entries $A_{ij}\sim \mathcal{N}\lp 0,1\rp$. For $\delta>0$, with probability at least $1-2\exp(-\delta^2)$, we have
    \ben
        \sqrt{n}-(\sqrt{m}+\delta)\leq \sigma_m(A)\leq \sigma_1(A)\leq \sqrt{n}+(\sqrt{m}+\delta)\,.
    \een
\end{lemma_sec}
The proof can be found in~\citet[Theorem 2.13]{davidson2001local}. We also need the following inequality.
\begin{lemma_sec}\label{lem_weyl_ineq_deriv}
    Let $A\in \mathbb{R}^{k\times n},B\in \mathbb{R}^{n\times m}$. Suppose $n\leq m$, then
    $$\sigma_i(A)\sigma_n(B)\leq \sigma_i(AB)\,,$$
    for $1\leq i\leq \min\{k,n\}$.
\end{lemma_sec}
\begin{proof}
    We start with the case where $k=n$. When $\sigma_n(B^T)=0$, the result is trivial. When $\sigma_n(B^T)\neq 0$, we have $BB^\dagger=I$, where $B^\dagger$ is the Moore–Penrose inverse of $B$. By Weyl's inequality~\citep[7.3.P16]{Horn:2012:MA:2422911}, it follows that
    \ben
        \sigma_i(A)\leq \sigma_i(AB)\sigma_1(B^\dagger),\ \forall 1\leq i\leq n\,.
    \een
    Since $\sigma_1(B^\dagger)=\sigma^{-1}_n(B)$, we get the desired inequality.
    
    When $k>n$, we have $\forall 1\leq i\leq n$,
    \ben
        \sigma_i(A)=\sigma_i\lp\bmt A & 0_{k\times (k-n)}\emt\rp
        \leq \sigma_i\lp AB\rp\sigma_1(\bmt B^\dagger & 0_{m\times (k-n)}\emt)
        =\sigma_i(AB)\sigma_1(B^\dagger)\,,
    \een
    which still leads to the desired result.
    
    When $k<n$, consider replacing $A$ with $\bmt A\\ 0_{(n-k)\times n}\emt$, we have $\forall 1\leq i\leq k$,
    \ben
        \sigma_i(A)\sigma_n(B)=\sigma_i\lp\bmt A\\ 0_{(n-k)\times n}\emt\rp\sigma_n(B)
        \leq \sigma_i\lp\bmt AB\\ 0_{(n-k)\times m}\emt\rp=\sigma_i(AB)\,.
    \een
\end{proof}


Now we are ready to prove Lemma \ref{lem_si_no}.
\begin{customlem}{10}[restated]\label{lem_si_no_alpha} Let $\frac{1}{4}<\alpha\leq \frac{1}{2}$. Given data matrix $X$. $\forall \delta\in(0,1)$, $\forall h>h_0=poly\lp m,n,\frac{1}{\delta}\rp$, with probability at least $1-\delta$ over random initialization with $[U(0)]_{ij},[V(0)]_{ij}\sim\mathcal{N}(0,h^{-2\alpha})$, the following conditions hold:
    \begin{enumerate}[leftmargin=4mm]
        \item (Sufficient level of imbalance)
        \begin{align}
            \underline{\lambda}_+(0)+\underline{\lambda}_-(0)>h^{1-2\alpha}\,,\label{eq_si_no1}
        \end{align}
        where $\underline{\lambda}_+,\underline{\lambda}_-$ are defined in \eqref{eq_def_eigs2}.
        \item (Approximate orthogonality)
        \be
        \lV\bmt V(0)U_2^T(0)\\
        U_1(0)U_2^T(0)\emt\rV_F\leq  2\sqrt{m+r}\frac{\sqrt{m+n}+\frac{1}{2}\log \frac{2}{\delta}}{h^{2\alpha-\frac{1}{2}}}\,,\label{eq_si_noeq2}
        \ee
        \be
        \lV U_1(0)V^T(0)\rV_F\leq 2 \sqrt{m}\frac{\sqrt{m+n}+\frac{1}{2}\log \frac{2}{\delta}}{h^{2\alpha-\frac{1}{2}}}\,.\label{eq_si_no_}
        \ee
    \end{enumerate}
\end{customlem}
\begin{proof}[Proof of Lemma \ref{lem_si_no_alpha}]
    For readability we simply write $U(0),U_1(0),U_2(0),V(0),D(0)$ as $U,U_1,U_2,V,D$.
    
    Consider the matrix $\bmt V^T &U^T\emt$ which is $h\times (m+n)$. Apply Lemma \ref{thm_sv_rnd_mat} to matrix $A=h^\alpha\bmt V^T &U^T\emt$, with probability at least $1-\delta$, we have
    \ben 
        \sigma_{m+n}(h^\alpha\bmt V^T &U^T\emt)\geq \sqrt{h}-\lp\sqrt{m+n}+\delta\rp\,,
    \een
    which leads to
    \be 
         \sigma_{m+n}(\bmt V^T &U^T\emt)\geq h^{\frac{1}{2}-\alpha}-\frac{\sqrt{m+n}+\frac{1}{2}\log \frac{2}{\delta}}{h^\alpha}\,.\label{eq_sv_bd_concat_mat}
    \ee
    Regarding the first inequality, we write the imbalance as
    \ben
        U_1^TU_1-V^TV=\bmt V^T & U_1^T\emt\bmt -V  \\U_1\emt= \bmt V^T & U^T\emt\bmt -I_m &0\\
        0& \Phi_1\Phi_1^T\emt\bmt V \\ U\emt\,.
    \een
    For $h>\lp\sqrt{m+n}+\frac{1}{2}\log \frac{2}{\delta}\rp^2$, assume event \eqref{eq_sv_bd_concat_mat} happens, then $$\sigma_{m+n}\lp\bmt V^T & U^T\emt\rp\geq h^{\frac{1}{2}-\alpha}-\frac{\sqrt{m+n}+\frac{1}{2}\log \frac{2}{\delta}}{h^\alpha}>0\,,$$ hence we have
    \begin{align*}
        \sigma_{r+m}(D)=&\;\sigma_{r+m}(U_1^TU_1-V^TV)\\
        =&\;\sigma_{r+m}\lp \bmt V^T & U^T\emt\bmt -I_m &0\\
        0& \Phi_1\Phi_1^T\emt\bmt V \\ U\emt\rp\\
        (\text{Lemma }\ref{lem_weyl_ineq_deriv}) \geq &\;\sigma_{r+m}\lp \bmt V^T & U^T\emt\bmt -I_m &0\\
        0& \Phi_1\Phi_1^T\emt\rp\sigma_{m+n}\lp\bmt V \\ U\emt\rp\\
        = &\;\sigma_{r+m}\lp \bmt -I_m &0\\
        0& \Phi_1\Phi_1^T\emt\bmt V \\ U\emt\rp\sigma_{m+n}\lp\bmt V \\ U\emt\rp\\
        (\text{Lemma }\ref{lem_weyl_ineq_deriv}) \geq &\; \sigma_{r+m}\lp \bmt -I_m &0\\
        0& \Phi_1\Phi_1^T\emt\rp \sigma^2_{m+n}\lp\bmt V \\ U\emt\rp\\
        =&\;\sigma_{r+m}\lp \bmt -I_m &0\\
        0& \Phi_1\Phi_1^T\emt\rp \sigma^2_{m+n}\lp\bmt V^T & U^T\emt\rp\\
        =&\;\sigma^2_{m+n}\lp\bmt V^T & U^T\emt\rp\,,
    \end{align*}
    where the last equality is due to the fact that $\bmt -I_m &0\\
    0& \Phi_1\Phi_1^T\emt$ has exactly $r+m$ non-zero singular value and all of them are $1$.
    
    We further assume $h>16\lp\sqrt{m+n}+\frac{1}{2}\log \frac{2}{\delta}\rp^2$, conditioned on event \eqref{eq_sv_bd_concat_mat}, with probability $1$ we have
    \begin{align}
        \sigma_{r+m}(D) \geq &\; \sigma^2_{m+n}\lp\bmt V^T & U^T\emt\rp\nonumber\\
        \geq &\; \lp h^{\frac{1}{2}-\alpha}-\frac{\sqrt{m+n}+\frac{1}{2}\log \frac{2}{\delta}}{h^\alpha}\rp^2\nonumber\\
        = &\;h^{1-2\alpha}-2 \frac{\sqrt{m+n}+\frac{1}{2}\log \frac{2}{\delta}}{h^{2\alpha-\frac{1}{2}}}+\lp \frac{\sqrt{m+n}+\frac{1}{2}\log \frac{2}{\delta}}{h^\alpha}\rp^2\nonumber\\
        >&\;h^{1-2\alpha}-2 \frac{\sqrt{m+n}+\frac{1}{2}\log \frac{2}{\delta}}{h^{2\alpha-\frac{1}{2}}}\geq \frac{1}{2}h^{1-2\alpha}\,.\label{eq_event_imbalance}
    \end{align}
    
    Lastly, due to the minimax property of symmetric matrix~\citep[Theorem 4.2.6]{Horn:2012:MA:2422911}, we have
    \begin{align*}
        \lambda_{r+1}(D) &=\;\min_{\mathrm{dim}(S)=h-r}\max_{0\neq x\in S}\frac{x^TD x}{x^Tx}\\
        (\mathrm{dim}(\mathrm{ker}(U_1))\geq h-r)&\leq\; \min_{\substack{S\subseteq \mathrm{ker}(U_1)\\\mathrm{dim}(S)=h-r} }\max_{0\neq x\in S}\frac{x^TD x}{x^Tx}\\
        &=\; \min_{\substack{S\subset \mathrm{ker}(U_1)\\\mathrm{dim}(S)=r} }\max_{0\neq x\in S}\frac{x^T(-V^TV)x}{x^Tx}\leq 0\,,
    \end{align*}
    and
    \begin{align*}
        \lambda_r(D) &=\;\max_{\mathrm{dim}(S)=r}\min_{0\neq x\in S}\frac{x^TDx}{x^Tx}\\
        \lp\mathrm{dim}(\mathrm{ker}(V(0)))\geq h-m\geq r\rp&\geq\; \max_{\substack{S\subseteq \mathrm{ker}(V(0))\\\mathrm{dim}(S)=r} }\min_{0\neq x\in S}\frac{x^TDx}{x^Tx}\\
        &=\; \max_{\substack{S\subset \mathrm{ker}(V(0))\\\mathrm{dim}(S)=r} }\min_{0\neq x\in S}\frac{x^TU_1^TU_1x}{x^Tx}\geq 0\,.
    \end{align*}
    Similarly, we have
    \begin{align*}
        \lambda_{m+1}(-D)
        &\leq\; \min_{\substack{S\subseteq \mathrm{ker}(V)\\\mathrm{dim}(S)=h-m} }\max_{0\neq x\in S}\frac{x^T(-U_1^TU_1)x}{x^Tx}\leq 0\,,
    \end{align*}
    and 
    \begin{align*}
        \lambda_m(-D)
        &\geq\; \max_{\substack{S\subseteq \mathrm{ker}(U_1(0))\\\mathrm{dim}(S)=m} }\min_{0\neq x\in S}\frac{x^TV^TVx}{x^Tx}\geq 0\,.
    \end{align*}
    These inequalities together imply
    \ben
        \min\{\lambda_{r}(D),\lambda_{m}(-D)\}=\sigma_{r+m}(D)\,.
    \een
    Here we also use the fact that $D$ is symmetric. Now by \eqref{eq_event_imbalance}, we immediately obtain that conditioned on event \eqref{eq_sv_bd_concat_mat}, with probability $1$, the following holds,

    \begin{align*}
        \underline{\lambda}_++\underline{\lambda}_-=\lambda_{r}(D)+\lambda_{m}(-D)&\geq\;2\sigma_{r+m}(D)\geq h^{1-2\alpha}\,,
    \end{align*}
    which is exactly \eqref{eq_si_no1}.
    
    Regarding the second and third inequality, using the fact that $$\|A\|_F\leq \sqrt{\min\{n,m\}}\|A\|_2, \ \forall A\in\mathbb{R}^{n\times m}\,,$$
    we have
    \begin{align*}
        \frac{1}{\sqrt{m}}\lV U_1V^T\rV_F\leq \lV U_1V^T\rV_2
        =&\; \lV \bmt 0& \Phi_1^T\emt \bmt V\\
        U\emt\bmt V^T &U^T\emt \bmt I_m\\ 0\emt\rV_2\\
        =&\; \lV \bmt 0& \Phi_1^T\emt \lp\bmt V\\
        U\emt\bmt V^T &U^T\emt-\eta I_{m+n}\rp \bmt  I_m\\ 0\emt\rV_2\\
        \leq &\; \lV \bmt V\\
        U\emt\bmt V^T &U^T\emt-\eta I_{m+n}\rV_2\,, \text{for any } \eta\in\mathbb{R}\,,
    \end{align*}
    where the second equality is due to the fact that $\bmt 0& \Phi_1^T\emt \bmt I_m\\ 0\emt=0$. And
    
    \begin{align*}
        \frac{1}{\sqrt{m+r}}\lV\bmt VU_2^T\\
        U_1U_2^T\emt\rV_F\leq  \lV\bmt VU_2^T\\
        U_1U_2^T\emt\rV_2
        =&\; \lV \bmt I_m &0\\
        0&\Phi_1^T\emt\bmt V\\
        U\emt\bmt V^T &U^T\emt\bmt 0\\ \Phi_2\emt\rV_2\\
        =&\; \lV \bmt I_m &0\\
        0&\Phi_1^T\emt\lp\bmt V\\
        U\emt\bmt V^T &U^T\emt-\eta I_{m+n}\rp\bmt 0\\ \Phi_2\emt\rV_2\\
        \leq &\; \lV \bmt V\\
        U\emt\bmt V^T &U^T\emt-\eta I_{m+n}\rV_2\,, \text{for any } \eta\in\mathbb{R}\,,
    \end{align*}
    where the second equality is due to the fact that $\bmt I_m &0\\
        0&\Phi_1^T\emt\bmt 0\\  \Phi_2\emt=0$.
    Notice that
    \begin{align*}
        \lV \bmt V\\
        U\emt\bmt V^T &U^T\emt-\eta I_{m+n}\rV_2 = \max_i\lv \sigma_i^2(\bmt V^T&U^T\emt) -\eta\rv\,.
    \end{align*}

    Again we let $h>\lp\sqrt{m+n}+\frac{1}{2}\log \frac{2}{\delta}\rp^2$. When event \eqref{eq_sv_bd_concat_mat} happens, all $\sigma^2_i(\bmt V^T & U^T\emt)$ are within the interval $\left[ \lp h^{\frac{1}{2}-\alpha}-\frac{\sqrt{m+n}+\frac{1}{2}\log \frac{2}{\delta}}{h^\alpha}\rp^2,\lp h^{\frac{1}{2}-\alpha}-\frac{\sqrt{m+n}+\frac{1}{2}\log \frac{2}{\delta}}{h^\alpha}\rp^2\right]$. Since the choice of $\eta$ is arbitrary, we pick \be
    \eta=h^{1-2\alpha}+\lp\frac{\sqrt{m+n}+\frac{1}{2}\log \frac{2}{\delta}}{h^\alpha}\rp^2\label{eq_lem_si_no_1}\,,\ee which is the mid-point of this interval, then we have
    \begin{align*}
        &\;\max_i\lv \sigma_i^2(\bmt V^T&U^T\emt) -\eta\rv\nonumber\\
       \leq &\; \max\lb \lv\lp h^{\frac{1}{2}-\alpha}-\frac{\sqrt{m+n}+\frac{1}{2}\log \frac{2}{\delta}}{h^\alpha}\rp^2-\eta\rv,\lv\lp h^{\frac{1}{2}-\alpha}+\frac{\sqrt{m+n}+\frac{1}{2}\log \frac{2}{\delta}}{h^\alpha}\rp^2-\eta\rv \rb\nonumber\\
       &\;\text{($\eta$ is the mid-point)}\\
       \leq &\; \lv\lp h^{\frac{1}{2}-\alpha}-\frac{\sqrt{m+n}+\frac{1}{2}\log \frac{2}{\delta}}{h^\alpha}\rp^2-h^{1-2\alpha}-\lp\frac{\sqrt{m+n}+\frac{1}{2}\log \frac{2}{\delta}}{h^\alpha}\rp^2\rv\nonumber\\
        =&\;2\frac{\sqrt{m+n}+\frac{1}{2}\log \frac{2}{\delta}}{h^{2\alpha-\frac{1}{2}}}
    \end{align*}
    Therefore, when $h>\lp\sqrt{m+n}+\frac{1}{2}\log \frac{2}{\delta}\rp^2$, conditioned on event \eqref{eq_sv_bd_concat_mat}, with probability $1$, we have
    \begin{align}
        &\;\lV U_1V^T\rV_F\leq \sqrt{m}\lV \bmt V\\
        U\emt\bmt V^T &U^T\emt-\eta I_{m+n}\rV_2\leq 2\sqrt{m}\frac{\sqrt{m+n}+\frac{1}{2}\log \frac{2}{\delta}}{h^{2\alpha-\frac{1}{2}}}\,,\nonumber\\
        \mathrm{and}
        &\;\lV\bmt VU_2^T\\
        U_1U_2^T\emt\rV_F\leq \sqrt{m+r}\lV \bmt V\\
        U\emt\bmt V^T &U^T\emt-\eta I_{m+n}\rV_2\leq 2\sqrt{m+r}\frac{\sqrt{m+n}+\frac{1}{2}\log \frac{2}{\delta}}{h^{2\alpha-\frac{1}{2}}}\,,\label{eq_event_zero_init}
    \end{align}
    where we choose $\eta$ as in~\eqref{eq_lem_si_no_1}. 
    
    When $h>h_0=16\lp\sqrt{m+n}+\frac{1}{2}\log \frac{2}{\delta}\rp^2$ and conditioned on event \eqref{eq_sv_bd_concat_mat}, events \eqref{eq_event_imbalance} and \eqref{eq_event_zero_init} happen with probability $1$, hence the probability that both \eqref{eq_event_imbalance} and \eqref{eq_event_zero_init} happen is at least the probability of event \eqref{eq_sv_bd_concat_mat}, which is at least $1-\delta$.
\end{proof}

With Lemma \ref{lem_si_no_alpha}, we can prove Theorem \ref{thm_asymp_conv_min_norm}.
\begin{customthm}{2}[restated]\label{thm_asymp_conv_min_norm_alpha}
    Let $\frac{1}{4}<\alpha\leq \frac{1}{2}$. Let $V(t),U(t), t>0$ be the trajectory of the continuous dynamics \eqref{eq_gf_rp} starting from some $V(0), U(0)$. Then, $\exists C>0$, such that $\forall \delta\in(0,1), \forall h>h_0^{1/(4\alpha-1)}$ with $h_0=poly\lp m,n,\frac{1}{\delta},\frac{\lambda_1(\Sigma_x)}{\lambda_r^3(\Sigma_x)}\rp$, with probability $1-\delta$ over random initializations  with $[U(0)]_{ij},[V(0)]_{ij}\sim\mathcal{N}(0,h^{-2\alpha})$, we have
        \be
            \|U(\infty)V^T(\infty)-\hat{\Theta}\|_2\leq 2C^{1/h^{1-2\alpha}}\sqrt{m+r}\frac{\sqrt{m+n}+\frac{1}{2}\log \frac{2}{\delta}}{h^{2\alpha-\frac{1}{2}}}\,.
        \ee
        Here $C=\exp\lp 1+ \frac{\lambda_1^{1/2}(\Sigma_x)}{\lambda_r(\Sigma_x)}\|Y\|_F\rp$, which depends on the data $X,Y$.
\end{customthm}
\begin{proof}[Proof of Theorem \ref{thm_asymp_conv_min_norm_alpha}]
    From Corollary \ref{col_conv_imb} and Proposition \ref{prop_conv_stationary}, the stationary point $U(\infty),V(\infty)$ satisfy
    \ben
        U_1(\infty)V^T(\infty)=\Phi_1^T\hat{\Theta},\quad U_2(\infty)=U_2(0)\,, 
    \een
    provided that level of imbalance $\underline{\lambda}_++\underline{\lambda}_-$ is non-zero, which is guaranteed with high probability by Lemma \ref{lem_si_no_alpha}.
    Hence we have
    \begin{align*}
        \|U(\infty)V^T(\infty)-\hat{\Theta}\|_2 &=\; \|\Phi_1U_1(\infty)V^T(\infty)+\Phi_2U_2(\infty)V^T(\infty)-\hat{\Theta}\|_2\\
        &=\; \|\Phi_1\Phi_1^T\hat{\Theta}+\Phi_2U_2(\infty)V^T(\infty)-\hat{\Theta}\|_2\\
        &=\;\|\Phi_2U_2(\infty)V^T(\infty)\|_F\\
        &=\;\|\Phi_2U_2(0)V^T(\infty)\|_F= \|U_2(0)V^T(\infty)\|_2\leq \|U_2(0)V^T(\infty)\|_F\,.
    \end{align*}
    Consider the following dynamics
    \be
        \frac{d}{dt}\bmt VU_2^T\\ U_1U_2^T\emt = \underbrace{\bmt 0&  E^T\Sigma^{1/2}_x\\
        \Sigma^{1/2}_xE & 0\emt}_{:=A_Z}\underbrace{\bmt VU_2^T\\ U_1U_2^T\emt}_{:=Z}\,,\label{eq_proj_dym_app2}
    \ee
    which can be viewed as a time-variant linear system, and in particular, by~\citet[Theorem 7.3.3]{Horn:2012:MA:2422911}, we have $\|A_Z\|_2=\|\Sigma_x^{1/2}E\|_2$. Notice that here the $Z$ is different from the one in the proof for Proposition \ref{prop_conv_stationary}.
    
    From \eqref{eq_proj_dym_app2}, we have
    \begin{align*}
        \frac{d}{dt}\|Z\|_F^2&=\;2\tr\lp Z^TA_ZZ\rp\\
        &=\;2\tr\lp ZZ^TA_Z\rp\\
        &\leq 2\|A_Z\|_2\tr\lp ZZ^T\rp\\
        &=\;2\|\Sigma^{1/2}_xE\|_2\|Z\|_F^2\\
        &\leq \; 2\lambda_1^{1/2}(\Sigma_x)\|E\|_2\|Z\|_F^2\leq 2\lambda_1^{1/2}(\Sigma_x)\|E\|_F\|Z\|_F^2\,.
    \end{align*}
    By Gr\"onwall's inequality~\citep{gronwall1919}, we have $\forall t\geq 0$,
    \begin{align}
        &\;\|Z(t)\|_F^2\leq \exp\lp\int_0^t2\lambda_1^{1/2}(\Sigma_x)\|E(\tau)\|_F d\tau\rp\|Z(0)\|_F^2\nonumber\\
        \Ra &\; \|Z(t)\|_F\leq \exp\lp\int_0^t\lambda_1^{1/2}(\Sigma_x)\|E(\tau)\|_F d\tau\rp\|Z(0)\|_F\label{eq_thm_asymp_0}
    \end{align}
   
    Using Lemma \ref{lem_si_no_alpha}, for $h>h_0':= 16\lp\sqrt{m+n}+\frac{1}{2}\log \frac{2}{\delta}\rp^2$, with probability at least $1-\delta$ we have all the following.
    \begin{align}
        \underline{\lambda}_+(0)+\underline{\lambda}_-(0)&>\;h^{1-2\alpha}\,.\label{eq_thm_asymp_1}\\
        \lV U_1(0)V^T(0)\rV&\leq\;2\sqrt{m}\frac{\sqrt{m+n}+\frac{1}{2}\log \frac{2}{\delta}}{h^{2\alpha-\frac{1}{2}}}\,,\label{eq_thm_asymp_3}\\
        \|Z(0)\|_F =\lV\bmt V(0)U_2^T(0)\\
        U_1(0)U_2^T(0)\emt\rV_F&\leq\; 2\sqrt{m+r}\frac{\sqrt{m+n}+\frac{1}{2}\log \frac{2}{\delta}}{h^{2\alpha-\frac{1}{2}}}\label{eq_thm_asymp_2}
    \end{align}
    
    By Corollary \ref{col_conv_imb}, we have 
    \ben
        \|E(t)\|_F^2\leq \exp\lp -\lambda_r(\Sigma_x)c'(0)t\rp\|E(0)\|_F^2\,,
    \een
    where $c'(0) = 2(\underline{\lambda}_+(0)+\underline{\lambda}_-(0))$, then by \eqref{eq_thm_asymp_1}, we have
    \begin{align*}
        &\;\|E(t)\|_F^2\leq \exp\lp -2h^{1-2\alpha}\lambda_r(\Sigma_x)t\rp\|E(0)\|_F^2\\
        \Ra &\;\|E(t)\|_F\leq \exp\lp -h^{1-2\alpha}\lambda_r(\Sigma_x)t\rp\|E(0)\|_F\,.
    \end{align*}
    Finally, from \eqref{eq_thm_asymp_0}, we have
    \begin{align}
         \|Z(t)\|_F &\leq\; \exp\lp\int_0^t\lambda_1^{1/2}(\Sigma_x)\|E(\tau)\|_F d\tau\rp\|Z(0)\|_F\nonumber\\
         &\leq\; \exp\lp\lambda_1^{1/2}(\Sigma_x)\|E(0)\|_F \lp\int_0^t\exp\lp -h^{1-2\alpha}\lambda_r(\Sigma_x)\tau\rp d\tau\rp\rp\|Z(0)\|_F\nonumber\\
         &\leq\;
         \exp\lp\lambda_1^{1/2}(\Sigma_x)\|E(0)\|_F \lp\int_0^\infty\exp\lp -h^{1-2\alpha}\lambda_r(\Sigma_x)\tau\rp d\tau\rp\rp\|Z(0)\|_F\nonumber\\
         &=\;\exp\lp \frac{\lambda_1^{1/2}(\Sigma_x)}{h^{1-2\alpha}\lambda_r(\Sigma_x)}\|E(0)\|_F \rp\|Z(0)\|_F\,.\label{eq_bd_z_t}
    \end{align}
    The initial error depends on the initialization but can be upper bounded as
    \begin{align*}
        \|E(0)\|_F &=\; \|W^TY-\Sigma_x^{-1/2}U_1(0)V^T(0)\|_F\\
        &\leq\; \|W^TY\|_F+\|\Sigma_x^{-1/2}U_1(0)V^T(0)\|_F\\
        &\leq\; \|Y\|_F+\lambda_r^{-1/2}(\Sigma_x)\|U_1(0)V^T(0)\|_F
    \end{align*}
    then we can write \eqref{eq_bd_z_t} as
    \begin{align}
        \|Z(t)\|_F &\leq\;\exp\lp \frac{\lambda_1^{1/2}(\Sigma_x)}{h^{1-2\alpha}\lambda_r(\Sigma_x)}\|Y\|_F\rp \exp\lp \frac{\lambda_1^{1/2}(\Sigma_x)}{h^{1-2\alpha}\lambda_r^{3/2}(\Sigma_x)}\|U_1(0)V^T(0)\|_F\rp \|Z(0)\|_F\nonumber\\
        &=\;\lhp\exp\lp \frac{\lambda_1^{1/2}(\Sigma_x)}{\lambda_r(\Sigma_x)}\|Y\|_F\rp \exp\lp \frac{\lambda_1^{1/2}(\Sigma_x)}{\lambda_r^{3/2}(\Sigma_x)}\|U_1(0)V^T(0)\|_F\rp\rhp^{1/h^{1-2\alpha}} \|Z(0)\|_F
        \,.\label{eq_bd_z_t_interm}
    \end{align}
    For the second exponential, we let  $h_0:=\max\lb h_0',4\frac{\lambda_1(\Sigma_x)}{\lambda_r^{3}(\Sigma_x)}m\lp \sqrt{m+n}+\frac{1}{2}\log \frac{2}{\delta}\rp^2\rb$, then $\forall h>h_0^{1/(4\alpha-1)}$, by \eqref{eq_thm_asymp_3} we have
    \be
        \exp\lp \frac{\lambda_1^{1/2}(\Sigma_x)}{\lambda_r^{3/2}(\Sigma_x)}\|U_1(0)V^T(0)\|_F\rp \leq \exp\lp 2\frac{\lambda_1^{1/2}(\Sigma_x)}{\lambda_r^{3/2}(\Sigma_x)}\sqrt{m}\frac{\sqrt{m+n}+\frac{1}{2}\log \frac{2}{\delta}}{h^{2\alpha-\frac{1}{2}}}\rp\leq e\,.\label{eq_bd_exp_init}
    \ee
    Notice that $h>h_0^{1/(4\alpha-1)}$ also ensures $h>h_0^{1/(4\alpha-1)}\geq h_0\geq h_0'$, hence the width condition for \eqref{eq_thm_asymp_1}\eqref{eq_thm_asymp_2}\eqref{eq_thm_asymp_3} to hold is satisfied.
    
    Finally by \eqref{eq_thm_asymp_2}\eqref{eq_bd_exp_init}, we write \eqref{eq_bd_z_t_interm} as \begin{align*}
        \|Z(t)\|_F &\leq\;\lhp\exp\lp 1+ \frac{\lambda_1^{1/2}(\Sigma_x)}{\lambda_r(\Sigma_x)}\|Y\|_F\rp\rhp^{1/h^{1-2\alpha}} \|Z(0)\|_F\\
        &\leq\; \underbrace{\lhp\exp\lp 1+ \frac{\lambda_1^{1/2}(\Sigma_x)}{\lambda_r(\Sigma_x)}\|Y\|_F\rp\rhp^{1/h^{1-2\alpha}} }_{:=C^{1/h^{1-2\alpha}}}2\sqrt{m+r}\frac{\sqrt{m+n}+\frac{1}{2}\log \frac{2}{\delta}}{h^{2\alpha-\frac{1}{2}}}\\
        &=\;2C^{1/h^{1-2\alpha}}\sqrt{m+r}\frac{\sqrt{m+n}+\frac{1}{2}\log \frac{2}{\delta}}{h^{2\alpha-\frac{1}{2}}}\,.
    \end{align*}
    Therefore for some $C>0$ that depends on the data $(X,Y)$, given any $0<\delta<1$, when $h>h_0^{1/(4\alpha-1)}$ as defined above, with at least probability $1-\delta$, we have
    \begin{align*}
        \|U(\infty)V^T(\infty)-\hat{\Theta}\|_2&\leq\;\|U_2(0)V^T(\infty)\|_F\\
        &\leq\; \sup_{t>0}\|U_2(0)V^T(t)\|_F\\
        &\leq \;\sup_{t>0}\|Z(t)\|_F\leq 2C^{1/h^{1-2\alpha}}\sqrt{m+r}\frac{\sqrt{m+n}+\frac{1}{2}\log \frac{2}{\delta}}{h^{2\alpha-\frac{1}{2}}}\,.
    \end{align*}
\end{proof}



\end{document}